\newcommand\numberthis{\addtocounter{equation}{1}\tag{\theequation}}
\newtheorem{definition}{Definition}
\newtheorem{lemma}{Lemma}
\newtheorem{theorem}{Theorem}
\newtheorem*{theorem*}{Theorem}
\DeclareMathOperator{\sign}{sgn}
\DeclareMathOperator*{\argmin}{argmin}
\DeclareMathOperator*{\expect}{ \mathbb{E}}
\DeclareMathOperator*{\score}{\mathcal{R}_{adv}}
\DeclareMathOperator*{\Risk}{\mathcal{R}}
\DeclareMathOperator*{\probset}{\mathcal{P}}
\DeclareMathOperator*{\pen}{\Omega}
\DeclareMathOperator*{\penCW}{\Omega_{\text{norm}}}
\DeclareMathOperator*{\penMass}{\Omega_{\text{mass}}}
\DeclareMathOperator*{\BRD}{BR}
\DeclareMathOperator*{\BRA}{BR_{\pen}}
\DeclareMathOperator*{\scorereg}{\mathcal{R}_{adv}^{\pen}}
\DeclareMathOperator*{\scoreregCW}{\mathcal{R}_{adv}^{\penCW}}
\DeclareMathOperator*{\scoreregMass}{\mathcal{R}_{adv}^{\penMass}}
\DeclareMathOperator*{\essup}{essup}
\DeclareMathOperator*{\essinf}{essinf}
\DeclareMathOperator{\proj}{\pi}
\newtheorem*{addLemma}{Additional Result}
\icmltitlerunning{Randomization matters}
\begin{document}
\twocolumn[
\icmltitle{Randomization matters \\
           How to defend against strong adversarial attacks}

\icmlkeywords{Machine Learning, ICML}

\icmlsetsymbol{equal}{*}

\begin{icmlauthorlist}
\icmlauthor{Rafael Pinot}{equal,dauphine,list}
\icmlauthor{Raphael Ettedgui}{equal,dauphine}
\icmlauthor{Geovani Rizk}{dauphine}
\icmlauthor{Yann Chevaleyre}{dauphine}
\icmlauthor{Jamal Atif}{dauphine}
\end{icmlauthorlist}

\icmlaffiliation{dauphine}{Universit\'e Paris-Dauphine, PSL Research University, CNRS, LAMSADE, Paris, France}
\icmlaffiliation{list}{Institut LIST, CEA, Universit\'e Paris-Saclay, France}
\icmlcorrespondingauthor{Rafael Pinot}{rafael.pinot@dauphine.fr}
\icmlcorrespondingauthor{Raphael Ettedgui}{raphael.ettedgui@dauphine.eu}

\icmlkeywords{Machine Learning, ICML}
\vskip 0.3in]



\printAffiliationsAndNotice{\icmlEqualContribution} 

\begin{abstract}
 \emph{Is there a classifier that ensures optimal robustness against all adversarial attacks?}
This paper tackles this question by adopting a game-theoretic point of view. We present the adversarial attacks and defenses problem as an \emph{infinite} zero-sum game where classical results (\emph{e.g.} Nash or Sion theorems) do not apply. We demonstrate the non-existence of a Nash equilibrium in our game when the classifier and the Adversary are both deterministic, hence giving a negative answer to the above question in the deterministic regime. Nonetheless, the question remains open in the randomized regime. We tackle this problem by showing that any deterministic classifier can be outperformed by a randomized one. This gives arguments for using randomization, and leads us to a simple method for building randomized classifiers that are robust to state-or-the-art adversarial attacks. Empirical results validate our theoretical analysis, and show that our defense method considerably outperforms Adversarial Training against strong adaptive attacks, by achieving 0.55 accuracy under adaptive PGD-attack on CIFAR10, compared to 0.42 for Adversarial training.
\end{abstract}

\section{Introduction}
\label{section::introduction}

Adversarial example attacks recently became a major concern in the machine learning community. An adversarial attack refers to a small, imperceptible change of an input that is maliciously designed to fool a machine learning algorithm. Since the seminal work of~\cite{biggio2013evasion} and~\cite{Szegedy2013IntriguingPO} it became increasingly important to understand the very nature of this phenomenon~\cite{NIPS2016_6331,NIPS2018Fawzi,pmlr-v97-bubeck19a,NIPS2019_8307,NIPS2019_8963}. Furthermore, a large body of work has been published on designing attacks~\cite{goodfellow2014explaining,Papernot2016TheLO,madry2018towards,carlini2017towards, athalye2018obfuscated} and defenses~\cite{goodfellow2014explaining,papernot2016distillation,madry2018towards,KolterRandomizedSmoothing}. 

Besides, in real-life scenarios such as for an autonomous car, errors can be very costly. It is not enough to just defend against new attacks as they are published. We would need an algorithm that behaves optimally against every single attack. However, it remains unknown whether such a defense exists. This leads to the following questions, for which we provide principled and theoretically-grounded answers.

\textbf{Q1:} Is there a deterministic classifier that ensures optimal robustness against any adversarial attack?

\noindent\textbf{A1:} To answer this question, in Section~\ref{section::defninition}, we cast the adversarial examples problem as an \emph{infinite} zero-sum game between a Defender (the classifier) and an Adversary that produces adversarial examples. Then we demonstrate, in Section~\ref{section::deterministicregime}, the non-existence of a Nash equilibrium in the deterministic setting of this game.
This entails that no deterministic classifier can claim to be more robust than all other classifiers against any possible adversarial attack. Another consequence of our analysis is that there is no free lunch for transferable attacks: an attack that works on all classifiers will never be optimal against any of them.

\textbf{Q2:} Would randomized defense strategies be a suitable alternative to defend against strong adversarial attacks?

\noindent\textbf{A2:} We tackle this problem both theoretically and empirically. In Section~\ref{section::mixedregime}, we demonstrate that for any deterministic defense there exists a mixture of classifiers that offers better worst-case theoretical guarantees. Building upon this, we devise a method that generates a robust randomized classifier with a one step boosting method. We evaluate this method, in Section~\ref{section::NumericalExperiments}, against strong adaptive attacks on  CIFAR10 and CIFAR100 datasets. It outperforms Adversarial Training against both $\ell_{\infty}$\textbf{-PGD}~\cite{madry2018towards}, and $\ell_{2}$\textbf{-C\&W}~\cite{carlini2017towards} attacks. More precisely, on CIFAR10, our algorithm achieves $0.55$ (resp. $0.53$) accuracy under attack against these attacks, which is an improvement of $0.13$ (resp. $0.18$) over Adversarial Training.

\section{Related Work}
\label{section::related Work}

Many works have studied adversarial examples, in several different settings. We discuss hereafter the different frameworks that we believe to be related to our work, and discuss the aspects on which our contribution differs from them.

\textbf{Distributionally robust optimization.} The work in~\cite{sinha2018certifiable} addresses the problem of adversarial examples through the lens of distributionally robust optimization. They study a min-max problem where the Adversary manipulates the test distribution while being constrained in a  Wasserstein distance ball (they impose a global constraint on distributions for the Adversary, while we study a local, pointwise constraint, leading to different attack policies). A similar analysis was presented in~\cite{NIPS2018_7534} in a more general setting that does not focus on adversarial examples. Even though our work studies a close problem, our reasoning is very different. We adopt a game theoretic standpoint, which allows us to investigate randomized defenses and endow them with strong theoretical evidences.

\textbf{Game Theory.} 
Some works have tackled the problem of adversarial examples as a two player game. For example~\cite{10.1145/2020408.2020495}  views adversarial example attacks and defenses as a Stackelberg game. More recently, \cite{7533509} and \cite{DBLP:journals/corr/abs-1906-02816} investigated zero-sum games. They consider restricted versions of the game where classical theorems apply, such as when the players only have a finite set of possible strategies. We study a more general setting. Finally, \cite{pruningDefenseICLR2018} motivates the use of noise injection as a defense mechanism by game theoretic arguments but only present empirical results. 

\textbf{Randomization.} Following the work of \cite{pruningDefenseICLR2018} and \cite{Xie2017MitigatingAE}, several recent works studied noise injection as a defense mechanism. In particular, \cite{lecuyer2018certified}, followed by \cite{KolterRandomizedSmoothing,NIPS2019_9143,Pinot2019,NIPS2019_8443} demonstrated that noise injection can, in some cases, give provable defense against adversarial attacks. The analysis and defense method we propose in this paper are not based on noise injection.
However, a link could be made between these works and the mixture we propose, by noting that a classifier in which noise is being injected can be seen as an infinite mixture of perturbed classifiers.

\textbf{Optimal transport.} Our work considers a distributionnal setting, in which the Adversary manipulating the dataset is formalized by a push-forward measure. This kind of setting is close to optimal transport settings recently developed by \cite{NIPS2019_8968} and \cite{pydi2019adversarial}. Specifically, these works investigate classifier-agnostic lower bounds on the risk for binary classification under attack, with some hypothesis on the data distribution. The main differences are that we focus on studying equilibria and not deriving bounds. Moreover, these works do not study the influence of randomization. Finally they express the optimal risk of the Defender in terms of transportation costs between two distributions, whereas we explicitly study the Adversary's behaviour as a transport from one distribution to another. Even though they do not treat the problem from the same prism, we believe that these works are profoundly related and complementary to ours.

\textbf{Ensemble of classifiers.} Some works have been done to improve the robustness of a model by constructing ensemble of classifiers \cite{abbasi2017robustness,xu2017feature,verma2019error,pang2019improving,sen2020empir}. However all the defense methods proposed in those papers subsequently proved to be ineffective against adaptive attacks introduced in~\cite{he2017adversarial,tramer2020adaptive}. The main difference with our method is that it is not an ensemble method since it uses sampling instead of voting to aggregate the classifiers’ output. Hence in terms of volatility, in voting methods, whenever a majority agrees on an opinion, all others votes will be ignored, whereas here each classifier always contributes according to its probability weights, which do not depend on the others.

\section{A Game Theoretic point of view.}

\subsection{Initial problem statement}
\label{section::defninition}

\textbf{Notations.} For any set $\mathcal{Z}$ with $\sigma$-algebra $\sigma\left( \mathcal{Z} \right)$, if there is no ambiguity on the considered $\sigma$-algebra, we denote $\probset\left(\mathcal{Z}\right)$ the set of all probability measures over $\left( \mathcal{Z} , \sigma\left( \mathcal{Z} \right)\right)$, and $\mathcal{F}_{\mathcal{Z}}$ the set of all measurable functions from $\left( \mathcal{Z} , \sigma\left( \mathcal{Z} \right)\right)$ to $\left( \mathcal{Z} , \sigma\left( \mathcal{Z} \right)\right)$. For $\mu \in \probset\left(\mathcal{Z}\right)$ and $\phi \in \mathcal{F}_{\mathcal{Z}}$, the \emph{pushforward measure} of $\mu$ by $\phi$ is the measure $\phi \# \mu$ such that $\phi \# \mu(B) = \mu(\phi^{\text{-}1}(B))$ for any $B \in \sigma(\mathcal{Z}).$

\textbf{Binary classification task.}
Let $\mathcal{X} \subset \mathbb{R}^d$ and $\mathcal{Y}= \{\text{-}1,1\}$. We consider a distribution $\mathcal{D} \in \probset\left(\mathcal{X} \times \mathcal{Y}\right)$ that we assume to be of support $\mathcal{X} \times \mathcal{Y}$. The Defender is looking for a hypothesis (classifier) $h$ in a class of functions $\mathcal{H}$, minimizing the risk of $h$ w.r.t. $\mathcal{D} $:
\begin{equation}
\label{eqn::DefenderRisk}
\begin{split}
    \Risk(h) :&=  \expect_{(X,Y)\sim \mathcal{D}}\left[ \mathds{1} \left\{ h(X) \neq Y \right\}\right] \\ 
    &=  \expect_{Y \sim \nu} \left[  \expect_{X \sim \mu_Y} \left[ \mathds{1} \left\{ h(X) \neq Y \right\}\right] \right].
\end{split}
\end{equation}
Where $\mathcal{H} := \{h:x \mapsto \sign g(x) \mid g:\mathcal{X}\rightarrow \mathbb{R} \textnormal{ continuous}\}$, $\nu \in \probset\left(\mathcal{Y}\right)$ is the probability measure that defines the law of the random variable $Y$, and for any $y \in \mathcal{Y} $, $\mu_y \in \probset\left(\mathcal{X}\right)$ is the conditional law of $X \vert (Y=y)$. \\

\textbf{Adversarial example attack (point-wise).}
Given a classifier $h: \mathcal{X} \rightarrow \mathcal{Y}$ and a data sample $(x,y) \sim \mathcal{D}$, the Adversary seeks a perturbation $\tau \in \mathcal{X}$ that is visually imperceptible, but modifies $x$ enough to change its class, \emph{i.e.} $h(x+\tau) \neq y$. Such a perturbation is called an \textit{adversarial example attack}. In practice, it is hard to evaluate the set of visually imperceptible modifications of an image.
However, a sufficient condition to ensure that the attack is undetectable is to constrain the perturbation $\tau$ to have a small norm, be it for the $\ell_{\infty}$ or the $\ell_2$ norm. Hence, one should always ensure that $\norm{\tau}_{\infty} \leq \epsilon_\infty$, or $\norm{\tau}_2 \leq \epsilon_ 2$, depending on the norm used to measure visual imperceptibility. The choice of the threshold depends on the application at hand. For example, on CIFAR datasets, typical values for $\epsilon_\infty$ and $\epsilon_2$ are respectively, $0.031$ and $0.4/0.6/0.8$. In the remaining of this work, we will define our constraint using an $\ell_2$ norm, but all our results are valid for an $\ell_{\infty}$ based constraint. 

\textbf{Adversarial example attack (distributional).} The Adversary chooses, for every $x \in \mathcal{X}$, a perturbation that depends on its true label $y$. This amounts to construct, for each label $y\in \mathcal{Y}$, a measurable function $\phi_y$ such that $\phi_y(x)$ is the perturbation
associated with the labeled example $(x,y)$. This function naturally induces a probability distribution over adversarial examples, which is simply the push-forward measure $\phi_y \# \mu_y$.
The goal of the Adversary is thus to find  $\boldsymbol{\phi} = (\phi_{\text{-}1},\phi_{1}) \in (\mathcal{F}_{\mathcal{X} \vert \epsilon_2})^2$ that maximizes the adversarial risk $\score(h,\boldsymbol{\phi})$ defined as follows:
\begin{equation}
\label{eqn::AttackerRisk}
\begin{split}
     \score(h,\boldsymbol{\phi}):= \expect_{Y \sim \nu} \left[  \expect_{X \sim \phi_{\small{Y}} \# \mu_Y} \left[ \mathds{1} \left\{ h(X) \neq Y \right\}\right] \right].
\end{split}
\end{equation}
Where for any $\epsilon_2 \in (0,1)$, $\mathcal{F}_{\mathcal{X} \vert \epsilon_2}$ is the set of functions that imperceptibly modifies a distribution: 
\begin{equation*}
\begin{split}
     \mathcal{F}_{\mathcal{X} \vert \epsilon_2}:= \left \{ \psi \in \mathcal{F}_{\mathcal{X}} \mid \essup\limits_{x \in \mathcal{X}}\norm{\psi(x) - x}_2 \leq \epsilon_2 \right \}.
\end{split}
\end{equation*}





\textbf{Adversarial defense, a two-player zero-sum game.} With the setting defined above, the adversarial examples problem can be seen as a two-player zero-sum game, where the Defender tries to find the best possible hypothesis $h$, while a strong Adversary is manipulating the dataset distribution:
\begin{equation}
\label{InfSup}
\inf\limits_{h \in \mathcal{H}} \sup\limits_{ \boldsymbol{\phi} \in \left(\mathcal{F}_{\mathcal{X} \vert \epsilon_{2}}\right)^2} \score(h, \boldsymbol{\phi}). 
\end{equation}
This means that the Defender tries to design the classifier with the best performance under attack, whereas the Adversary will each time design the optimal attack on this specific classifier. 
In the game theoretical terminology, the choice of a classifier $h$ (resp. an attack $\boldsymbol{\phi}$) for the Defender (resp. the Adversary) is called a \emph{strategy}. It is crucial to note that the $\sup$-$\inf$ and $\inf$-$\sup$ problems do not necessarily coincide. In this paper, we mainly focus on the Defender's point of view which corresponds to the $\inf$-$\sup$ problem. We will be interested in understanding the behaviour of players in this game, \emph{i.e.} the best responses they have to a given strategy, and whether some equilibria may arise. This motivates the following definitions.

\begin{definition}[Best Response]
Let $h \in \mathcal{H}$, and $\boldsymbol{\phi} \in \left(\mathcal{F}_{\mathcal{X} \vert \epsilon_{2}}\right)^2$. A \emph{best response} from the Defender to $\boldsymbol{\phi}$ is a classifier $h^{*} \in \mathcal{H}$ such that $\score(h^{*},\boldsymbol{\phi}) = \min\limits_{h \in \mathcal{H}} \score(h,\boldsymbol{\phi})$.
Similarly, a \emph{best response} from the Adversary to $h$ is an attack $\boldsymbol{\phi}^{*} \in \left(\mathcal{F}_{\mathcal{X} \vert \epsilon_{2}}\right)^2$ such that $\score(h, \boldsymbol{\phi}^{*}) = \max\limits_{\boldsymbol{\phi} \in \left(\mathcal{F}_{\mathcal{X} \vert \epsilon_{2}}\right)^2} \score(h, \boldsymbol{\phi})$.
\label{def:BestResponse}
\end{definition}

In the remaining, we denote $\BRD(h)$ the set of all best responses of the Adversary to a classifier $h$. Similarly $\BRD(\boldsymbol{\phi})$ denotes the set of best responses to an attack $\boldsymbol{\phi}$.

\begin{definition}[Pure Nash Equilibrium]
\label{def:NashEq}
In the zero-sum game (Eq.~\ref{InfSup}), a \emph{Pure Nash Equilibrium} is a couple of strategies $(h, \boldsymbol{\phi}) \in \mathcal{H} \times \left(\mathcal{F}_{\mathcal{X} \vert \epsilon_{2}}\right)^2 $ such that
$$ \left\{
    \begin{array}{ll}
    h &\in \BRD(\boldsymbol{\phi}), \text{ and,} \\
    \boldsymbol{\phi} &\in \BRD(h).
    \end{array}
    \right. $$
\end{definition}

When it exists, a Pure Nash Equilibrium is a state of the game in which no player has any incentive to modify its strategy. In our setting, this simultaneously means that no attack could better fool the current classifier, and that the classifier is optimal for the current attack.

\textbf{Remark.} All the definitions in this section assume a deterministic regime, \emph{i.e.} that neither the Defender nor the Adversary use randomization, hence the notion of \emph{Pure} Nash Equilibrium in the game theory terminology. The randomized regime will be studied in Section~\ref{section::mixedregime}.

\subsection{Trivial solution and Regularized Adversary}

\textbf{Trivial Nash equilibrium.} Our current definition of the problem implies that the Adversary has perfect information on the dataset distribution and the classifier. It also has unlimited computational power and no constraint on the attack except on the size of the perturbation. Going back to the example of the autonomous car, this would mean that the Adversary can modify every single image that the camera \emph{may} receive during \emph{any} trip, which is highly unrealistic. The Adversary has no downside to attacking, even when the attack is unnecessary, \emph{e.g.} if the attack cannot work or if the point is already misclassified. 

This type of behavior for the Adversary can lead to the existence of a pathological (and trivial) Nash Equilibrium as demonstrated in Figure~\ref{fig:attaqueGaussiennes} for the uni-dimensional setting with Gaussian distributions. The unbounded Adversary moves every point toward the decision boundary (each time maximizing the perturbation budget), and the Defender cannot do anything to mitigate the damage. In this case the decision boundary for the Optimal Bayes Classifier remains unchanged, even though both curves have been moved toward the center, hence a trivial equilibrium.
In the remaining of this work, we show that such an equilibrium does not exist as soon as there is a small restraint on the Adversary's strength, \emph{i.e.} as soon as it is not perfectly indifferent to produce unnecessary perturbations.

\begin{figure*}[ht]
\centering
    \includegraphics[width=\textwidth]{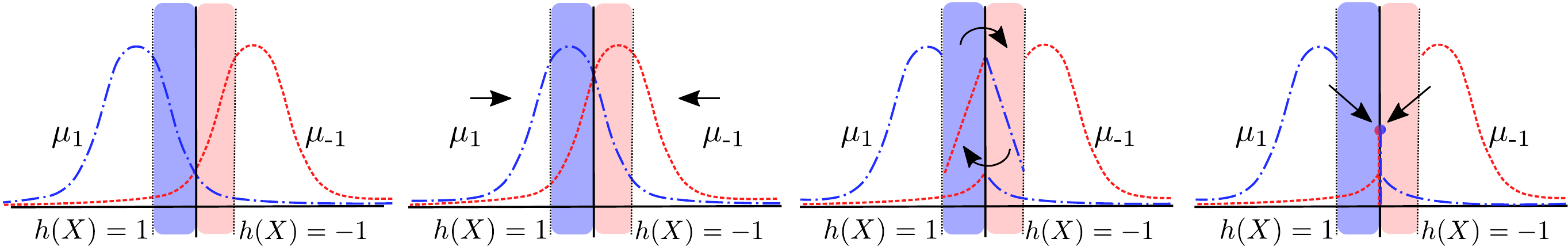}
    \caption{Representation of the $\mu_{\text{-}1}$ (blue dotted line) and $\mu_1$ (red plain line) distributions, without attack (left) and with three different attacks: no penalty (second drawing), with mass penalty (third) and with norm penalty (fourth). On all figures blue area on the left of the axis is $P_h(\epsilon_2)$ and red area on the right is $N_h(\epsilon_2)$.}
    \label{fig:attaqueGaussiennes}
\end{figure*}

\textbf{Regularized Adversary.} 
To mitigate the Adversary strength, we introduce a penalization term: 

\begin{equation}
\label{InfSupreg}
     \inf\limits_{h \in \mathcal{H}} \sup\limits_{ \boldsymbol{\phi} \in \left(\mathcal{F}_{\mathcal{X} \vert \epsilon_{2}}\right)^2}  \underbrace{\left[ \score(h, \boldsymbol{\phi}) 
      - \lambda \pen\left(\boldsymbol{\phi}\right) \right]}_{\textstyle \scorereg(h, \boldsymbol{\phi})}.
\end{equation}

The penalty function $\pen$ represents the limitations on the Adversary's budget, be it because of computational resources or to avoid being detected. $\lambda \in (0,1)$ is some regularization weight. In this paper, we study two types of penalties: the \emph{mass penalty} $\penMass$, and the \emph{norm penalty} $\penCW$. 

From a computer-security point of view, the first limitation that comes to mind is to limit the number of queries the Adversary can send to the classifier. In our distributional setting, this boils down to penalizing the mass of points that the function $\boldsymbol{\phi}$ moves. Hence we define the mass penalty as:  
\begin{equation}
    \label{penalty::massPenalty}
    \penMass(\boldsymbol{\phi}) :=  \expect_{Y \sim \nu} \left[  \expect_{X \sim \mu_Y} \left[ \mathds{1}\left\{ X \neq \phi_Y(X)\right\} \right] \right].
\end{equation}
The mass penalty discourages the Adversary from attacking too many points by penalizing the overall mass of transported points. The second limitation we consider penalizes the expected norm under $\boldsymbol{\phi}$:  
\begin{equation}
\begin{split}
    \label{penalty::CW}
    \penCW(\boldsymbol{\phi}) :=  \expect_{Y \sim \nu} \left[ \expect_{X \sim \mu_Y} \left[ \norm{X -\phi_Y(X)}_2  \right] \right].
\end{split}
\end{equation} 
This regularization is very common in both the optimization and adversarial example communities. In particular, it is used by Carlini $\&$ Wagner~\cite{carlini2017towards} to compute the eponymous attack\footnote{$\penCW$ is not limited to $\ell_2$ norm. The results we present hold as long as the norm used to compare $X$ and $\phi_Y(X)$ comes from a scalar product on $\mathcal{X}$.}.
In the following, we denote $\BRD_{\penMass}$ (resp. $\BRD_{\penCW}$) the best responses for the Adversary w.r.t the mass (resp. norm) penalty. Section~\ref{section::deterministicregime} shows that whatever penalty the Adversary has, no Pure Nash Equilibrium exists. We characterize the best responses for each player, and show that they can never satisfy Definition~\ref{def:NashEq}.

\section{Deterministic regime}
\label{section::deterministicregime}

\textbf{Notations.} Let $h \in \mathcal{H}$, we denote $P_h := \left\{  x\in \mathcal{X} \mid  h(x) = 1 \right\}$, and $N_h := \left\{ x \in \mathcal{X} \mid h(x) = \text{-}1 \right\}$ respectively the set of positive and negative outputs of $h$. We also  denote the set of attackable points from the positive outputs $P_h(\delta) := \left\{ x \in P_h \mid \exists z \in N_h \text{ and } \Vert z-x \Vert_2 \leq \delta \right\}$, and $N_h(\delta)$ likewise.
    
\textbf{Adversary's best response.} Let us first present the best responses of the Adversary under respectively the mass penalty and the norm penalty. Both best responses share a fundamental behavior: the optimal attack will only change points that are close enough to the decision boundary. This means that, when the Adversary has no chance of making the classifier change its decision about a given point, it will not attack it. However, for the norm penalty all attacked points are projected on the decision boundary, whereas with the mass penalty the attack moves the points across the border.

\begin{lemma}
\label{lemma:BRattMass}
Let $h \in \mathcal{H}$ and $\boldsymbol{\phi} \in \BRD_{\penMass}(h)$. Then the following assertion holds: $$
    \left\{
    \begin{array}{ll}
    \phi_1(x) \in (P_h)^{\complement} &\text{ if }  x \in P_h(\epsilon_2) \\
    \phi_1(x) = x &\text{ otherwise}.
    \end{array}
    \right.
$$ 
Where $(P_h)^{\complement}$, the complement of $P_h$ in $\mathcal{X}$. $\phi_{\text{-}1}$ is characterized symmetrically.
\end{lemma}

\begin{lemma}
\label{lemma:BRattCW}
Let $h \in \mathcal{H}$ and  $\boldsymbol{\phi} \in \BRD_{\penCW}(h)$. Then the following assertion holds: $$
    \phi_1 (x) = \left\{
    \begin{array}{ll}
    \pi(x) &\text{ if } x \in P_h(\epsilon_2)\\
    x &\text{ otherwise}.
    \end{array}
    \right.
$$
Where $\pi$ is the orthogonal projection on $(P_h)^{\complement}$. $\phi_{\text{-}1}$ is characterized symmetrically. 
\end{lemma}


 These best responses are illustrated in Figure~\ref{fig:attaqueGaussiennes} with two uni-dimensional Gaussian distributions. For the mass penalty, $\mu_1$ is set to $0$ in $P_h(\epsilon_2)$, and this mass is transported into $N_h(\epsilon_2)$. The symmetric holds for $\mu_{\text{-}1}$. After attack, we now have $\mu_1\left(P_h(\epsilon_2)\right)=0$, so a small value of $\mu_{\text{-}1}$ in $P_h(\epsilon_2)$ suffices to make it dominant, and that zone will now be classified -$1$ by the Optimal Bayes Classifier.  For the norm penalty, the part of $\mu_1$ that was in $P_h(\epsilon_2)$ is transported on a Dirac distribution at the decision boundary. Similarly to the mass penalty, the best response now predicts $\text{-}1$ for the zone $P_h(\epsilon_2)$.

\textbf{Remark.} In practice, it might be computationally hard to generate the exact best response for the norm penalty, \emph{i.e.} the projection on the decision boundary. That will happen for example if this boundary is very complex (\emph{e.g.} highly non-smooth), or when $\mathcal{X}$ is in a high dimensional space. To keep the attack tractable, the Adversary will have to compute an approximated best response by allowing the projection to reach the point within a small ball around the boundary. This means that the best responses of the norm penalty and the mass penalty problems will often match.

\textbf{Defender's best response.} At a first glance, one would suspect that the best response for the Defender ought to be the Optimal Bayes Classifier for the transported distribution. However, it is only well defined if the conditional distributions admit a probability density function. This might not always hold here for the transported distribution. Nevertheless, we show that there is a property, shared by the Optimal Bayes Classifier when defined, that always holds for the Defender's best response.

\begin{lemma}
\label{lemma:BRDefender}
Let us consider $\boldsymbol{\phi} \in \left(\mathcal{F}_{\mathcal{X} \vert \epsilon_{2}}\right)^2$. If we take $h \in \BRD(\boldsymbol{\phi})$, then for $y =1$ (resp. $y=\text{-}1$), and for any $B \subset P_h$ (resp. $B \subset N_h$) one has $$ \mathbb{P}(Y=y \vert X \in B) \geq \mathbb{P}(Y= -y \vert X \in B) $$ with $Y \sim \nu$ and for all $y \in \mathcal{Y}$, $X|(Y=y) \sim \phi_y \# \mu_y $.
\end{lemma}

In particular, when $\phi_1 \# \mu_1$ and $\phi_{\text{-}1} \# \mu_{\text{-}1}$ admit probability density functions, Lemma~\ref{lemma:BRDefender} simply means that $h$ is the Optimal Bayes Classifier for the distribution $(\nu,\phi_1 \# \mu_1, \phi_{\text{-}1} \# \mu_{\text{-}1})$\footnote{We prove this result in the supplementary material.}. We can now state our main theorem, as well as two of its important consequences.

\begin{theorem}[Non-existence of a pure Nash equilibrium]
\label{theorem::PureNashEquilibrium}
In the zero-sum game (Eq.~\ref{InfSupreg}) with $\lambda \in (0,1)$ and penalty $\Omega \in \{ \penMass,\penCW \}$, there is no Pure Nash Equilibrium.
\end{theorem}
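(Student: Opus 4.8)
The plan is to argue by contradiction: suppose a pair $(h,\boldsymbol{\phi})$ satisfying Definition~\ref{def:NashEq} exists, for some penalty $\pen \in \{\penMass,\penCW\}$ and $\lambda \in (0,1)$. The point is to exploit the two sides of the equilibrium condition simultaneously. On the one hand $\boldsymbol{\phi} \in \BRD_{\pen}(h)$, so the shape of the optimal attack is pinned down by Lemma~\ref{lemma:BRattMass} for the mass penalty and by Lemma~\ref{lemma:BRattCW} for the norm penalty. On the other hand $h \in \BRD(\boldsymbol{\phi})$, so the transported distribution must satisfy the local dominance property of Lemma~\ref{lemma:BRDefender}. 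The entire proof then reduces to exhibiting a single region on which these two requirements collide.

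The region I would use is the attackable set $P_h(\epsilon_2) \subseteq P_h$ (the argument on $N_h(\epsilon_2)$ is symmetric). The key computation is that the adversary's best response empties this region of the $+1$ class. Indeed, by Lemma~\ref{lemma:BRattMass} every $x \in P_h(\epsilon_2)$ is sent by $\phi_1$ into $(P_h)^{\complement}$, while points outside $P_h(\epsilon_2)$ are fixed; since $(P_h)^{\complement}$ is disjoint from $P_h \supseteq P_h(\epsilon_2)$, this gives $\phi_1^{-1}(P_h(\epsilon_2)) = \emptyset$, hence $\phi_1\#\mu_1\big(P_h(\epsilon_2)\big)=0$. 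For the norm penalty the same conclusion follows from Lemma~\ref{lemma:BRattCW}: attacked points are projected onto $(P_h)^{\complement}$ via $\pi$, so again no $+1$ mass remains in $P_h(\epsilon_2)$. Conversely, the $-1$ points lying in $P_h(\epsilon_2)$ are already on the wrong side of $h$ and are thus left untouched by $\phi_{\text{-}1}$ in both penalty regimes, so $\phi_{\text{-}1}\#\mu_{\text{-}1}\big(P_h(\epsilon_2)\big) \geq \mu_{\text{-}1}\big(P_h(\epsilon_2)\big)$.

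With this in hand the contradiction is immediate. Under the transported law one has $\mathbb{P}(Y=1 \mid X \in P_h(\epsilon_2)) = 0$, because $\phi_1\#\mu_1$ gives this set zero mass, whereas $\mathbb{P}(Y=\text{-}1 \mid X \in P_h(\epsilon_2)) > 0$. This violates the inequality $\mathbb{P}(Y=1 \mid X \in B) \geq \mathbb{P}(Y=\text{-}1\mid X\in B)$ that Lemma~\ref{lemma:BRDefender} forces on every $B \subseteq P_h$ when $h \in \BRD(\boldsymbol{\phi})$, applied to $B = P_h(\epsilon_2)$. Hence the assumed equilibrium cannot exist, proving Theorem~\ref{theorem::PureNashEquilibrium}.

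The step I expect to be the main obstacle is making the final conditioning legitimate, i.e. guaranteeing $\mathbb{P}\big(X \in P_h(\epsilon_2)\big) > 0$ under the transported distribution. This requires that $P_h(\epsilon_2)$ is nonempty and that $\mu_{\text{-}1}$ charges it, which I would derive from the standing assumption that $\mathcal{D}$ has full support $\mathcal{X}\times\mathcal{Y}$ together with the continuity of $g$, which makes the decision boundary, and hence a collar of attackable points around it, nonempty whenever $h$ is non-constant. The degenerate case of a constant classifier, for which there is nothing to attack, must be treated separately by checking via Lemma~\ref{lemma:BRDefender} that it cannot be a defender best response in the non-degenerate regime. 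Some measure-theoretic care is also needed for the norm penalty, where the escaping mass lands exactly on the boundary $\{g=0\}$; but since that boundary still lies in $(P_h)^{\complement}$, it only reinforces that the $+1$ mass leaves $P_h(\epsilon_2)$.
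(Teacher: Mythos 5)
Your proposal is correct and is essentially the paper's own proof: both arguments use the attack characterizations (Lemmas~\ref{lemma:BRattMass} and~\ref{lemma:BRattCW}) to show that the best-response attack leaves zero class-$1$ mass but positive class-$\text{-}1$ mass in the attackable region $P_h(\epsilon_2)$, and then contradict the dominance property of Lemma~\ref{lemma:BRDefender} applied to that very set. The conditioning issue you flag at the end is real but is glossed over in the paper, whose proof simply asserts $\mu_{\text{-}1}\left(P_h(\delta)\right) > 0$ by full support without addressing the possibility that $P_h(\delta)$ is empty (e.g.\ for a constant classifier, or when the two decision regions are farther than $\epsilon_2$ apart); your treatment is, if anything, more careful on this point than the published one.
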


\textbf{Consequence 1.} (\emph{No free lunch for transferable attacks})
To understand this statement, remark that, thanks to weak duality, the following inequality always holds: 

\begin{equation*}
\begin{split}
    \sup\limits_{\boldsymbol{\phi} \in \left(\mathcal{F}_{\mathcal{X} \vert \epsilon_{2}}\right)^2} \inf\limits_{h \in \mathcal{H}} \scorereg(h, \boldsymbol{\phi})
    \leq \inf\limits_{h \in \mathcal{H}} \sup\limits_{\boldsymbol{\phi} \in \left(\mathcal{F}_{\mathcal{X} \vert \epsilon_{2}}\right)^2} \scorereg(h, \boldsymbol{\phi}).
\end{split}
\end{equation*}

On the left side problem ($\sup$-$\inf$), the Adversary looks for the best strategy $\boldsymbol{\phi}$ against any \emph{unknown} classifier. This is tightly related to the notion of \emph{transferable attacks} (see \emph{e.g.} ~\cite{tramer-papernot2017transferable}), which refers to attacks successful against a wide range of classifiers. On the right side (our) problem ($\inf$-$\sup$), the Defender tries to find the best classifier under any possible attack, whereas the Adversary plays in second and specifically attacks this classifier.
As a consequence of Theorem~\ref{theorem::PureNashEquilibrium}, the inequality is always strict:
\begin{equation*}
    \sup\limits_{\boldsymbol{\phi} \in \left(\mathcal{F}_{\mathcal{X} \vert \epsilon_{2}}\right)^2} \inf\limits_{h \in \mathcal{H}} \scorereg(h, \boldsymbol{\phi}) \boldsymbol{<} \inf\limits_{h \in \mathcal{H}} \sup\limits_{\boldsymbol{\phi} \in \left(\mathcal{F}_{\mathcal{X} \vert \epsilon_{2}}\right)^2} \scorereg(h, \boldsymbol{\phi}).
\end{equation*}

This means that both problems are not equivalent. 
In particular, an attack designed to succeed against \emph{any} classifier (\emph{i.e.} a transferable attack) will not be as good as an attack tailored for a given classifier. Hence she has to trade-off between effectiveness and transferability of the attack.

\textbf{Consequence 2.} (\emph{No deterministic defense may be proof against every attack})
Let us consider the state-of-the-art defense which is Adversarial Training \cite{goodfellow2014explaining,madry2018towards}. The idea is to compute an efficient attack $\boldsymbol{\phi}$, and train the classifier on created adversarial examples, in order to move the decision boundary and make the classifier more robust to new perturbations by $\boldsymbol{\phi}$. 

To be fully efficient, this method requires that $\boldsymbol{\phi}$ remains an optimal attack on $h$ even after training. Our theorem shows that it is never the case: after training our classifier $h$ to become ($h'$) robust against $\boldsymbol{\phi}$, there will always be a different optimal attack $\boldsymbol{\phi}'$ that is efficient against $h'$. Hence Adversarial Training will never achieve a perfect defense.

\section{Randomization matters}
\label{section::mixedregime}

As we showed that there is no Pure Nash Equilibrium, no deterministic classifier may be proof against every attack. 
We would therefore need to allow for a wider class of strategies. A natural extension of the game would thus be to allow randomization for both players, who would now choose a distribution over pure strategies, leading to this game:
\begin{equation}
\label{InfSupMixed}
\inf\limits_{\eta \in \probset\left(\mathcal{H}\right) } \sup\limits_{\varphi \in \probset\left(\left(\mathcal{F}_{\mathcal{X} \vert \epsilon_{2}}\right)^2\right)} \expect_{\substack{h \sim \eta \\ \boldsymbol{\phi} \sim \varphi}}\left[ \scorereg(h, \boldsymbol{\phi})  \right].
\end{equation}
Without making further assumptions on this game (e.g. compactness),
we cannot apply known results from game theory (e.g. Sion theorem) to prove the existence of an equilibrium. These assumptions would however make the problem loose much generality, and do not hold here. 

\textbf{Randomization matters.} Even without knowing if an equilibrium exists in the randomized setting, we can prove that \emph{randomization matters}. More precisely we show that any deterministic classifier can be outperformed by a randomized one in terms of the worst case adversarial risk. To do so we simplify Equation~\ref{InfSupMixed} in two ways: 
\begin{enumerate}
    \item We do not consider the Adversary to be randomized, \emph{i.e.} we restrict the search space of the Adversary to $(\mathcal{F}_{\mathcal{X}})^2$ instead of $\probset\left((\mathcal{F}_{\mathcal{X}})^2\right)$. This condition corresponds to the current state-of-the-art in the domain: to the best of our knowledge, no efficient randomized adversarial example attack has been designed (and so is used) yet. 
    \item We only consider a subclass of randomized classifiers, called mixtures, which are discrete probability measures on a finite set of classifiers. We show that this kind of randomization is enough to strictly outperform any deterministic classifier. We will discuss later the use of more general randomization (such as noise injection) for the Defender. Let us now define a mixture of classifiers.
\end{enumerate}
\begin{figure*}[t]
    \centering
    \includegraphics[width=\textwidth]{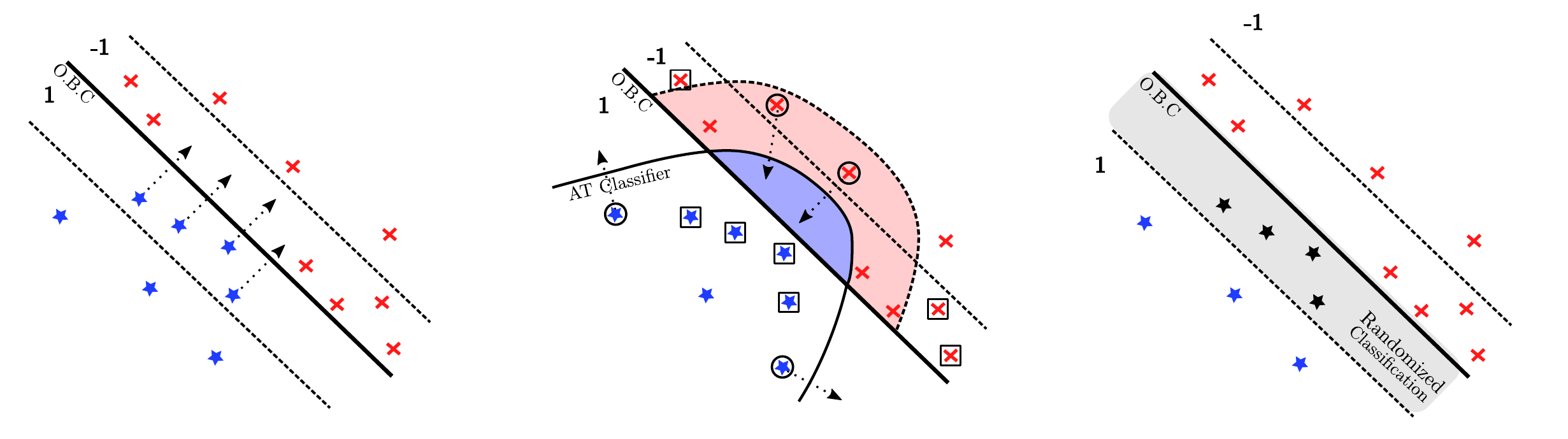}
    \caption{Illustration of adversarial examples (only on class $1$ for more readability) crossing the decision boundary (left), adversarially trained classifier for the class $1$ (middle), and a randomized classifier that defends class $1$. Stars are natural examples for class $1$, and crosses are natural examples for class $\text{-}1$. The straight line is the optimal Bayes classifier, and dashed lines delimit the points close enough to the boundary to be attacked resp. for class $1$ and -$1$. We focus the drawing on the star points. Crosses can be treated symmetrically.}  
    \label{fig::ATvsMixture}
\end{figure*}

\begin{definition}[Mixture of classifier]
\label{def:mixedStrat}
Let $n \in \mathbb{N}$, $\mathbf{h} = (h_1,...,h_n) \in \mathcal{H}^n$ , and $\mathbf{q} \in \probset\left(\{1,...,n\}\right)$. A \emph{mixed classifier of $\mathbf{h}$ by $\mathbf{q}$} is a mapping $m_{\mathbf{h}}^{\mathbf{q}}$ from $\mathcal{X}$ to $\probset\left(\mathcal{Y}\right)$ such that for all $x \in \mathcal{X}$, $m_{\mathbf{h}}^{\mathbf{q}}(x)$ is the discrete probability distribution that is defined for all $y \in \mathcal{Y}$ as follows:
\begin{equation*}
    m_{\mathbf{h}}^{\mathbf{q}}(x)(y) := \expect_{i \sim \mathbf{q}}\left[ \mathds{1}\left\{h_i(x)=y \right\} \right].
\end{equation*}
\end{definition}
We call such a mixture a \emph{mixed strategy} of the Defender. Given some $x \in \mathcal{X}$, this amounts to picking a classifier $h_i$ from $\mathbf{h}$ at random following the distribution $\mathbf{q}$, and use it to output the predicted class for $x$, \emph{i.e.} $h_i(x)$.
Note that a mixed strategy for the Defender is a non deterministic algorithm, since it depends on the sampling one makes on $\mathbf{q}$. Hence, even if the attacks are defined in the same way as before, the Adversary now needs to maximize a new objective function which is the expectation of the adversarial risk under the distribution $m_{\mathbf{h}}^{\mathbf{q}}$. It writes as follows:

\begin{equation}
\begin{split}
\label{eqn::MixedRisk}
     \expect_{Y \sim \nu} \left[  \expect_{X \sim \phi_{\small{Y}} \# \mu_Y} \left[ \expect_{\hat{Y} \sim m_{\mathbf{h}}^{\mathbf{q}}(X)} \left[ \mathds{1} \left\{ \hat{Y} \neq Y \right\} \right] \right] \right] - \lambda \pen\left(\boldsymbol{\phi}\right).
\end{split}
\end{equation}
We also write $\scorereg$ to mean the left part of Equation~(\ref{eqn::MixedRisk}), when it is clear from context that the Defender uses a mixed classifier. Using this new set of strategies for the Defender, we can study whether mixed classifiers outperform deterministic ones, and how to efficiently design them.

\textbf{Mixed strategy.} We demonstrate that the efficiency of any deterministic defense can be improved using a simple mixed strategy. This method presents similarities with the notions of fictitious play~\cite{brown1951iterative} in game theory, and boosting in machine learning~\cite{freund95decisiontheoretic}.
Given a deterministic classifier $h_1$, we combine it (via randomization) with the best response $h_2$ to its optimal attack. 

The rational behind this idea is that, by construction, efficient attacks on one of these two classifiers will not work on the other. Mixing $h_1$ with $h_2$ has two opposite consequences on the adversarial risk. On one hand, where we only had to defend against attack on $h_1$, we are now also vulnerable to attacks on $h_2$, so the total set of possible attacks is now bigger. On the other hand, each attack will only work part of the time, depending on the probability distribution $\textbf{q}$. If we can calibrate the weights so that attacks on important zones have a low probability of succeeding, then the average risk under attack on the mixture will be low. 

\textbf{Toy example where a mixture outperforms AT.} To better understand how randomization can work, let us look at a simple toy example. Figure~\ref{fig::ATvsMixture} illustrates a binary classification setting between two set of points. Attacking the Optimal Bayes Classifier (bold straight line) consists in moving all the points that lie between the dotted lines to the opposite side of the decision boundary (Figure~\ref{fig::ATvsMixture}, left). The general tactic to defend against an attack is to change the classifier's output for points that are too close to the boundary. This can be done all the time, as in Adversarial Training (where we move the decision boundary to incorporate adversarial examples), or part of the time as in a randomized algorithm (so that the attack only works with a given probability). 

When we use Adversarial Training for the star points (Figure~\ref{fig::ATvsMixture}, middle), we change the output on the blue zone, so that 2 of the star (squared) points cannot be successfully attacked anymore. But in exchange, the dilation of the new boundary can now be attacked. For Adversarial Training to work, we need the number of new potential attacks (\emph{i.e.} the points that are circled, $2$ crosses in the dilation and $2$ stars that are close to the new boundary) to be smaller than the number of attacks we prevent (the squared points, $2$ blue ones that an attack would send in the blue zone, and $3$ red points that are far from the new decision boundary). Here we prevent $5$ attacks at the cost of $4$ new ones, so the Adversarial Training improves the total score from $8$ to $7$.

Similarly, we observe what happens for the randomized defense (Figure~\ref{fig::ATvsMixture}, right). We mix the Optimal Bayes Classifier with the best response to attacking all the points. We get a classifier that is determinsitic outside the gray area, and random inside it\footnote{The grey area should actually  be bigger since the best response to the attack would also change the decision on the upper part between the OBC and the doted line. We focus on what happens on the star points for simplicity.}. If the first classifier has a weight $\alpha=0.5$, $6$ of the old attacks now succeed only with probability $0.5$ (crosses between the dotted lines), whereas 3 new attacks are created (stars outside of the gray area) that succeed with probability $0.5$ also. At the end, the average rate of successful attacks is $6.5$, where adversarial training previously achieved $7$.   

More formally, Theorem~\ref{theorem:MixedRegime} shows that whatever penalty we consider, a deterministic classifier can always be outperformed by a randomized algorithm. We now can state our second main result: randomization matters.

\begin{theorem}(Randomization matters)
\label{theorem:MixedRegime}
Let us consider $h_1 \in \mathcal{H}$, $\lambda \in (0,1)$, $\pen = \penMass$, $\boldsymbol{\phi} \in \BRD_{\pen}(h_1)$ and $h_2 \in \BRD(\boldsymbol{\phi})$. Then for any $\alpha \in (\max(\lambda, 1-\lambda),1)$ and for any $\boldsymbol{\phi}' \in \BRD_{\pen}(m^{\mathbf{q}}_{\mathbf{h}})$ one has
\begin{equation*}
     \scorereg(m^{\mathbf{q}}_{\mathbf{h}}, \boldsymbol{\phi}') < \scorereg(h_1, \boldsymbol{\phi}).
\end{equation*}
Where $\mathbf{h}=(h_1,h_2)$, $\mathbf{q}=(\alpha, 1-\alpha)$, and $m^{\mathbf{q}}_{\mathbf{h}}$ is the mixture of $\mathbf{h}$ by $\mathbf{q}$. A similar result holds when $\pen = \penCW$ (see supplementary materials).
\end{theorem}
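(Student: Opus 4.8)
The plan is to reduce both worst-case risks to region-wise integrals against $\mu_1$ and $\mu_{-1}$ and then compare them cell by cell. Since the mass penalty and the adversarial risk both decompose as expectations over individual points, any $\boldsymbol{\phi}' \in \BRD_{\penMass}(m^{\mathbf{q}}_{\mathbf{h}})$ optimizes each point independently, so that $\scorereg(m^{\mathbf{q}}_{\mathbf{h}},\boldsymbol{\phi}') = \expect_{Y\sim\nu}\expect_{X\sim\mu_Y}[g_Y(X)]$ with $g_y(x) := \max_{\|x'-x\|_2\le\epsilon_2}\left[\rho(x',y) - \lambda\mathds{1}\{x'\neq x\}\right]$ and $\rho(x',y) := \alpha\mathds{1}\{h_1(x')\neq y\} + (1-\alpha)\mathds{1}\{h_2(x')\neq y\}$. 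The same reduction applied to $h_1$ and its best response $\boldsymbol{\phi}$ yields a closed form, so it suffices to show that the $h_1$-integrand dominates the mixture integrand pointwise, with strict domination on a set of positive $\mu_y$-mass.

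First I would pin down the two players. By Lemma~\ref{lemma:BRattMass}, $\boldsymbol{\phi}$ transports the mass of $\mu_1$ lying in $P_{h_1}(\epsilon_2)$ into $N_{h_1}$ (and symmetrically for $\mu_{-1}$), which gives
\[ \scorereg(h_1,\boldsymbol{\phi}) = \nu(1)\left[(1-\lambda)\mu_1(P_{h_1}(\epsilon_2)) + \mu_1(N_{h_1})\right] + \nu(-1)\left[(1-\lambda)\mu_{-1}(N_{h_1}(\epsilon_2)) + \mu_{-1}(P_{h_1})\right]. \]
The key structural step is to locate $h_2$. After the attack the region $P_{h_1}(\epsilon_2)$ has lost all its native label-$1$ mass (transported into $N_{h_1}$) and received label-$(-1)$ mass transported from $N_{h_1}(\epsilon_2)$, so it is dominated by class $-1$; by Lemma~\ref{lemma:BRDefender}, together with the full-support hypothesis on $\mathcal{D}$ (which guarantees the transported measure charges this region), any best response satisfies $h_2 = -1$ on $P_{h_1}(\epsilon_2)$, and symmetrically $h_2 = +1$ on $N_{h_1}(\epsilon_2)$. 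On the complementary ``far'' regions the attacked distribution coincides with the original one, and there Lemma~\ref{lemma:BRDefender} tells us only that $h_2$ follows the majority class.

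Next I would partition $\mathcal{X}$ into the four cells cut out by near/far and $P_{h_1}/N_{h_1}$, and evaluate $g_y$ on each, using $\alpha\in(\max(\lambda,1-\lambda),1)$ to discard unprofitable moves: because $1-\alpha<\lambda$, the Adversary never gains by moving a point merely to fool $h_2$ (the new vulnerabilities created by mixing), and the attacks that used to defeat $h_1$ now only raise the error to $\alpha$ rather than $1$. On the near-positive cell $P_{h_1}(\epsilon_2)$ this yields, for label $1$, $g_1(x)=\max(1-\alpha,\alpha-\lambda)$, which is \emph{strictly} below the value $1-\lambda$ contributed under $h_1$, since $\alpha>\lambda$ and $\alpha<1$; the near-negative cell is symmetric. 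On the two far cells the mixture differs from $h_1$ only where $h_2$ overrides it, and the net change equals $(1-\alpha)$ times the excess of dominant-class mass over minority-class mass on $\{h_2\neq h_1\}$, which is nonnegative precisely by the majority property of Lemma~\ref{lemma:BRDefender}.

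Summing the four cells and weighting by $\nu$ then gives $\scorereg(h_1,\boldsymbol{\phi}) - \scorereg(m^{\mathbf{q}}_{\mathbf{h}},\boldsymbol{\phi}')\ge 0$, with a strictly positive contribution from the near cells, whose $\mu_y$-mass is positive by the full-support assumption; this produces the strict inequality. I expect the delicate part to be the far regions: because $h_1$ is an arbitrary element of $\mathcal{H}$ and need not be Bayes-optimal, one cannot assume $h_2=h_1$ away from the boundary, and ruling out that the override of $h_2$ makes the mixture worse there is exactly where Lemma~\ref{lemma:BRDefender} is essential. The remaining care is measure-theoretic, namely handling the null sets on which $h_2$ is unconstrained and checking, through the $\epsilon_2$ reachability constraint, which high-$\rho$ cell an attacker can actually reach from each point.
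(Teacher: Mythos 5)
Your overall architecture (reduce both worst-case scores to region-wise integrals, enumerate the Adversary's pointwise options on each cell, and use $\alpha>\max(\lambda,1-\lambda)$ to kill the unprofitable moves) is the same as the paper's. Where you genuinely diverge is the handling of $h_2$: the paper \emph{constructs} $h_2$ by flipping $h_1$ on $U=P_{h_1}(\epsilon_2)$ only, whereas you derive from Lemma~\ref{lemma:BRDefender} that any $h_2\in\BRD(\boldsymbol{\phi})$ must flip on \emph{both} near cells ($-1$ on $P_{h_1}(\epsilon_2)$, $+1$ on $N_{h_1}(\epsilon_2)$), since after the attack of Lemma~\ref{lemma:BRattMass} each of these regions carries mass of a single class. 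This step is correct and is in fact more faithful to the theorem's hypothesis than the paper's construction (which keeps $h_1$'s label on $N_{h_1}(\epsilon_2)$ and is therefore not a genuine best response to $\boldsymbol{\phi}$); your dismissal of the new attack surface via $1-\alpha<\lambda$, and your majority-mass argument on the far cells, also match the paper's terms in spirit.

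The gap is in your source of strictness. You claim that a label-$1$ point of $P_{h_1}(\epsilon_2)$ contributes $\max(1-\alpha,\alpha-\lambda)$, strictly below the $1-\lambda$ it contributes against $h_1$. This presumes the Adversary cannot reach any point where \emph{both} classifiers err. But recall the paper's convention: $h=\sign g$ with $\sign(0)=0$, so $\mathds{1}\{h(z)\neq y\}=\mathds{1}\{g(z)y\leq 0\}$ and boundary points count as errors for \emph{every} label. Since $h_2\in\mathcal{H}$ is of the form $\sign(g_2)$ with $g_2$ continuous, $g_2<0$ on $P_{h_1}(\epsilon_2)$ and $g_2>0$ on $N_{h_1}(\epsilon_2)$ force $g_2=0$ on the common boundary where $h_1$'s defining function also vanishes; such both-err points are reachable within $\epsilon_2$ from essentially every point of $P_{h_1}(\epsilon_2)$ (any segment to $N_{h_1}$ crosses the boundary). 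The Adversary thus recovers the value $1-\lambda$ on exactly the mass you count as strictly improved, and your cell comparison degrades to a weak inequality. This is precisely why the paper concedes $\max(1-\alpha,1-\lambda)=1-\lambda$ on that mass and extracts strictness from a term your account omits: the class-$(-1)$ mass sitting in $P_{h_1}(\epsilon_2)$ (symmetrically, class-$1$ mass in $N_{h_1}(\epsilon_2)$), i.e.\ the points $h_1$ misclassifies outright. Under $h_1$ these contribute error $1$ with no attack needed; under the mixture they contribute at most $\max(\alpha,1-\lambda)<1$, yielding the strictly positive gain $\min(1-\alpha,\lambda)\,\nu_{\text{-}1}\mu_{\text{-}1}\left(P_{h_1}(\epsilon_2)\right)$ of Equation~\eqref{eq::difference3bis}, which survives the boundary attack because even that attack only achieves $1-\lambda<1$. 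Add this (misclassified) mass to your near-cell accounting and the argument closes; without it, the strict inequality --- the entire content of the theorem --- does not follow.
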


\textbf{Remark}
Note that depending on the initial hypothesis $h_1$ and the conditional distributions $\mu_1$ and $\mu_{\text{-} 1}$, the gap between $\scorereg(m^{\boldsymbol{q}}_{\boldsymbol{h}},\boldsymbol{\phi}') $and $ \scorereg(h_1, \boldsymbol{\phi})$ could vary. Hence, with additional conditions on  $h_1$, $\mu_1$ and $\mu_{\text{-}1}$, we could make the gap appear more explicitly. We keep the formulation general to emphasize that for \emph{any} deterministic classifier, there exists a randomized one that outperforms it in terms of worst-case adversarial score.

Based on Theorem~\ref{theorem:MixedRegime} we devise a new procedure called Boosted Adversarial Training (BAT) to construct a robust mixture of two classifiers. It is based on three core principles: Adversarial Training, Boosting and Randomization.

\section{Experiments: How to build the mixture}
\label{section::NumericalExperiments}

\textbf{Simple mixture procedure (BAT).}  Given a dataset $D$ and a weight parameter $\alpha \in [0,1]$,
we construct $h_1$ the first classifier of the mixture using Adversarial Training\footnote{We use  $\ell_\infty$\textbf{-PGD} with $20$ iterations and $\epsilon_\infty = 0.031$ to train the first classifier and to build $\tilde{D}$.} on $D$. Then, we train the second classifier $h_2$ on a data set $\tilde{D}$ that contains adversarial examples against $h_1$ created from examples of $D$. At the end we return the mixture constructed with those two classifiers where the first one has a weight of $1-\alpha$ and the second one a weight of $\alpha$. The parameter $\alpha$ is found by conducting a grid-search. In Table~\ref{tab_accuracy_under_attack} we present results for $\alpha = 0.2$ under strong state-of-the-art attacks. The procedure is summarized in Algorithm~\ref{algorithm:Boosting}\footnote{More algorithmic and implementation details can be found in the supplementary materials.} 

\begin{algorithm}[H]
    \SetAlgoLined
    \textbf{Input} : $D$ the training data set and $\alpha$ the weight parameter.
    \vspace{0.3cm} \\
    Create and adversarially train $h_1$ on $D$ \\
    Generate the adversarial data set $\tilde{D}$ against $h_1$. \\
    Create and naturally train $h_2$ on $\tilde{D}$ \\
    ${\bf q} \leftarrow \left(1-\alpha, \alpha \right)$ \\
    ${\bf h} \leftarrow (h_1, h_2)$
    \vspace{0.3cm} \\
    return $m^{\mathbf{q}}_{\mathbf{h}}$
    \caption{Boosted Adversarial Training}
    \label{algorithm:Boosting}
\end{algorithm}

\begin{table*}[t]
\begin{center}
\begin{tabular}{ c | c | c | c | c c c }

 \multirow{2}{*}{\textbf{Dataset}} & \multirow{2}{*}{\textbf{Method}}& \textbf{Natural} & \textbf{Adaptive-}$l_{\infty}$\textbf{-PGD} & \multicolumn{3}{c}{\textbf{Adaptive-}$\ell_{2}$\textbf{-C\&W}}   \\ 
  &  & \textbf{Accuracy} & $\epsilon_\infty = 0.031$ & $\epsilon_2 = 0.4$ & $\epsilon_2 = 0.6$ & $\epsilon_2 = 0.8$ \\ 
 \Xhline{2\arrayrulewidth}
 \multirow{3}{4em}{CIFAR10} & Natural & 0.88 & 0.00 & 0.00 & 0.00 & 0.00\\ 
 &AT~\cite{madry2018towards} & 0.83 & 0.42 & \textbf{0.60} &0.47 &0.35\\ 
 &Ours & 0.80 & \textbf{0.55} &\textbf{0.60} &\textbf{0.57} &\textbf{0.53} \\ 

 \Xhline{2\arrayrulewidth}
 \multirow{3}{4em}{CIFAR100}&Natural & 0.62 & 0.00 & 0.00 & 0.00 & 0.00\\
 &AT~\cite{madry2018towards} & 0.58 & 0.26 & 0.38 & 0.29 & 0.22\\ 
 &Ours & 0.56 & \textbf{0.40} &\textbf{0.45} &\textbf{0.41} &\textbf{0.38} \\ 
\end{tabular}
\end{center}
\caption{Evaluation on CIFAR10 and CIFAR100 without \emph{data augmentation}. Accuracy under attack of a single adversarially trained classifier (AT) and the mixture formed with our method (Ours). The evaluation is made with \textbf{Adaptive-}$\ell_{\infty}$\textbf{-PGD} and \textbf{Adaptive-}$\ell_{2}$\textbf{-C\&W} attacks both computed with 100 iterations. For \textbf{Adaptive-}$\ell_{\infty}$\textbf{-PGD} we use an epsilon equal to $8/255$ ($\approx 0.031$), a step size equal to $2/255$ ($\approx 0.008$) and we allow random initialization.  For \textbf{Adaptive-}$\ell_{2}$\textbf{-C\&W} we use a learning rate equal to 0.01, 9 binary search steps, the initial constant to 0.001, we allow the abortion when it has already converged and we give the results for the different values of rejection threshold $\epsilon_2 \in \{0.4,0.6,0.8\}$. As for EOT, we don't need to estimate the expected accuracy of the mixture through Monte Carlo sampling since we have the exact weight of each classifier of the mixture. Thus we give the exact expected accuracy.}
\label{tab_accuracy_under_attack}
\end{table*}

\textbf{Comparison to fictitious play.}
Contrary to classical algorithms such as \emph{Fictitious play} that also generates mixtures of classifiers, and whose theoretical guarantees rely on the existence of a Mixed Nash Equilibrium, the performance of our method is ensured by Theorem~\ref{theorem:MixedRegime} to be at least as good as the classifier it uses as a basis. Moreover, the implementation of Fictitious Play would be impractical on the high dimensional datasets we consider, due to its computational costs.

\textbf{Evaluating against strong adversarial attacks.} When evaluating a defense against adversarial examples, it is crucial to test the robustness of the method against the best possible attack. Accordingly, the defense method should be evaluated against attacks that were specifically tailored to it (a.k.a. adaptive attacks). In particular, when evaluating randomized algorithms, one should use Expectation over Transformation (EOT) to avoid gradient masking as pointed out by~\cite{athalye2018obfuscated} and~\cite{carlini2019evaluating}. More recently, \cite{tramer2020adaptive} emphasized that one should also make sure that EOT is computed properly\footnote{In order for the attack to succeed, it it more efficient to compute the expected transformation of the logits instead of taking the expectation over the loss. More details on this in the supplementary materials.}. Previous works such as~\cite{pruningDefenseICLR2018} and~\cite{Pinot2019} estimate the EOT through a Monte Carlo sampling which can introduce a bias in the attack if the sample size is too small. Since we assume perfect information for the Adversary, it knows the exact distribution of the mixture. Hence it can directly compute the expectation without using a sampling method, which avoid any bias. Table~\ref{tab_accuracy_under_attack} evaluates our method against strong adaptive attacks namely \textbf{Adaptive-}$\ell_{\infty}$\textbf{-PGD} and \textbf{Adaptive-}$\ell_{2}$\textbf{-C\&W}.


\textbf{Hard constraint parameter.} The typical value of $\epsilon$ in the hard constraint depends on the norm we consider in the problem setting. In this paper, we use an $\ell_2$ norm, however, the constraint parameter for $\ell_{\infty}$\textbf{-PGD} attack was initially set to be an $\ell_{\infty}$ constraint. In order to compare attacks of similar strength, we choose different threshold ($\epsilon_2$ or $\epsilon_{\infty}$) values which result in balls of equivalent volumes. For CIFAR10 an CIFAR100 datasets~\cite{krizhevsky2009learning}, which are $3\times32\times32$ dimensional spaces, this gives $ \epsilon_\infty = 0.03$ and $\epsilon_2 = 0.8$ (we also give results for $\epsilon_2$ equal to $0.6$ and $0.4$ as this values are sometimes used in the literature). Since \textbf{Adaptive-}$\ell_{2}$\textbf{-C\&W} attack creates an unbounded perturbation on the examples, we implemented the constraint from Equation~\ref{penalty::CW} by checking at test time whether the $\ell_2$-norm of the perturbation exceeds a certain threshold $\epsilon_2 \in \{0.4,0.6,0.8\}$. If it does, the adversarial example is disregarded, and we keep the natural example instead.

\textbf{Experimental results.} In Table~\ref{tab_accuracy_under_attack} we compare the accuracy, on CIFAR10 and CIFAR100, of our method and classical Adversarial Training under attack with \textbf{Adaptive-}$\ell_{\infty}$\textbf{-PGD} and \textbf{Adaptive-}$\ell_{2}$\textbf{-C\&W}, both run for $100$ iterations. We used $5$ times more iterations for the evaluation as we used during training, and carefully check for convergence. the rational behind this is that, for a classifier to be fully robust, its loss of accuracy should be controlled when the attacks are stronger than the ones it was trained on. For both attacks, both datasets and all thresholds (\emph{\emph{i.e.} the budget for a perturbation}), the accuracy under attack of our mixture is higher than the single classifier with Adversarial Training. Our defense is especially more robust than Adversarial Training when the threshold is high. 

\textbf{Extension to more than two classifiers}. In this paper we focus our experiments on a mixture of two classifiers to present a proof of concept of Theorem~\ref{theorem:MixedRegime}. Nevertheless, a mixture of more than two classifiers can be constructed by adding at each step $t$ a new classifier trained naturally on the dataset $\tilde{D}$ that contains adversarial examples against the mixture at step $t-1$. Since  $\tilde{D}$ has to be constructed from a mixture, one would have to use an adaptive attack as \textbf{Adaptive-}$\ell_{\infty}$\textbf{-PGD}. We refer the reader to the supplementary material for this extended version of the algorithm and for all the implementation details related to our experiments (architecture of models, optimization settings, hyper-parameters, etc.).

\section{Discussion \& Conclusion}

Finally, is there a classifier that ensures optimal robustness against all adversarial attacks? We gave a negative answer to this question in the deterministic regime, but part of the question remains open when considering randomized algorithms. We demonstrated that randomized defenses are more efficient than deterministic ones, and devised a simple method to implement them. 

\textbf{Game theoretical point of view.} There remains to study whether an Equilibrium exists in the Randomized regime. This question is appealing from a theoretical point of view, and requires to investigate the space of randomized Adversaries $\mathcal{P}((\mathcal{F}_{\mathcal{X}})^2)$. The characterization of this space is not straightforward, and would require strong results in the theory of optimal transport. A possible research direction is to quotient the space $(\mathcal{F}_{\mathcal{X}})^2$ so as to simplify the search in $\mathcal{P}((\mathcal{F}_{\mathcal{X}})^2)$ and the characterization of the Adversary's best responses. The study of this equilibrium is tightly related to that of the value of the game, which would be interesting for
obtaining min-max bounds on the accuracy under attack, as well as certificates of robustness for a set of classifiers.

\textbf{Advocating for more provable defenses.} Although the experimental results show that our mixture of classifiers outperforms Adversarial Training, our algorithm does not provide guarantees in terms of certified accuracy. As the literature on adversarial attacks and defenses demonstrated, better attacks always exist. This is why, more theoretical works need to be done to prove the robustness of a mixture created from this particular algorithm. More generally, our work advocates for the study of mixtures as a provable defense against adversarial attacks. 
One could, for example, build upon the connection between mixtures and noise injection to investigate a broader range of randomized strategies for the Defender, and devise certificates accordingly. 

\textbf{Improving Boosted Adversarial Training.} From an algorithmic point of view, BAT can be improved in several ways. For instance, the weights can be learned while choosing the new classifier for the mixture. This could lead to an improved accuracy under attack, but would lack some theoretical justifications that still need to be set up. 
Finally, tighter connections with standard boosting algorithms could be established to improve the analysis of BAT.

\section*{Acknowledgements}
We thank anonymous reviewers,  whose  comments  helped  us  improve  the  paper significantly. We also thank Rida Laraki and Guillaume Carlier for fruitful discussions on game theory as well as Alexandre Araujo for proof reading our experiments. This work was granted access to the HPC resources of IDRIS under the allocation 2020-101141 made by GENCI.

\bibliographystyle{icml2020}
\bibliography{biblio}

\newpage

\onecolumn

\icmltitle{Supplementary Material}

\setcounter{section}{0}
\setcounter{theorem}{0}
\setcounter{lemma}{0}

\section{Omitted proofs and Additional results}
\label{section::proofResults}

\paragraph{Notations.} Let us suppose that ($\mathcal{X}$,$\norm{.}$) is a normed vector space. $B_{\norm{.}}(x, \epsilon) = \left\{ z \in \mathcal{X} \mid \norm{x -z} \leq \epsilon \right\} $ is the closed ball of center $x$ and radius $\epsilon$ for the norm $\norm{.}$. Note that $\mathcal{H} := \{h:x \mapsto \sign g(x) \mid g:\mathcal{X}\rightarrow \mathbb{R} \textnormal{ continuous}\}$, with $\sign$ the function that outputs $1$ if $g(x)>0$, $-1$ if $g(x)<0$, and $0$ otherwise. Hence 
for any $(x,y)\sim D$, and $h \in \mathcal{H}$ one has $\mathds{1}\{ h(x) \neq y \} = \mathds{1}\{ g(x)y \leq 0 \}$. Finally, we denote $\nu_1$ and $\nu_{\text{-}1}$ respectively the probabilities of class $1$ and $\text{-}1$.

\paragraph{Introducing remarks.} Let us first note that in the paper, the penalties are defined with an $\ell_2$ norm. However, Lemma \ref{lemma:BRattMass} and~\ref{lemma:BRattCW} hold as long as $\mathcal{X}$ is an Hilbert space with dot product $< \mid >$ and associated norm $||.|| = \sqrt{< . \mid . >}$. We first demonstrate Lemma~\ref{lemma:BRattCW} with these general notations. Then we present the proof of Lemma~\ref{lemma:BRattMass} that follows the same schema. Note that, for Lemma~\ref{lemma:BRattMass}, we do not even need the norm to be Hilbertian, since the core argument rely on separation property of the norm, \emph{i.e.} on the property $\norm{x - y} = 0 \iff x=y$. \\

\setcounter{lemma}{1}
\begin{lemma}
\label{lemma:BRattCW}
Let $h \in \mathcal{H}$ and  $\boldsymbol{\phi} \in \BRD_{\penCW}(h)$. Then the following assertion holds: $$
    \phi_1 (x) = \left\{
    \begin{array}{ll}
    \pi(x) &\text{ if } x \in P_h(\epsilon_2)\\
    x &\text{ otherwise}.
    \end{array}
    \right.
$$
Where $\pi$ is the orthogonal projection on $(P_h)^{\complement}$. $\phi_{\text{-}1}$ is characterized symmetrically. 
\end{lemma}

\begin{proof}

Let us first simplify the worst case adversarial risk for $h$. Recall that $h=\sign(g)$ with $g$ continuous. From the definition of adversarial risk we have:
\begin{align}
    & \sup\limits_{\boldsymbol{\phi} \in \left(\mathcal{F}_{\mathcal{X} \vert \epsilon_{2}}\right)^2} \scoreregCW(h,\boldsymbol{\phi}) \\
    = & \sup\limits_{\boldsymbol{\phi} \in \left(\mathcal{F}_{\mathcal{X} }\right)^2} \sum\limits_{y=\pm 1} \nu_y \expect_{X \sim \mu_y} \big[ \mathds{1}\left\{h\left(\phi_y(X)\right) \neq y \right\} - \lambda \norm{X-\phi_y(X)} - \infty \mathds{1}\left\{ \norm{X-\phi_y(X)} > \epsilon_2 \right\} \big]
    \\
    = & \sup\limits_{\boldsymbol{\phi} \in \left(\mathcal{F}_{\mathcal{X} }\right)^2} \sum\limits_{y=\pm 1} \nu_y \expect_{X \sim \mu_y} \big[ \mathds{1}\left\{g\left(\phi_y(X)\right)y \leq 0 \right\} - \lambda \norm{X-\phi_y(X)} - \infty \mathds{1}\left\{ \norm{X-\phi_y(X)} > \epsilon_2 \right\} \big]
    \\
    = &  \sum\limits_{y=\pm 1} \nu_y \sup\limits_{\phi_y \in \mathcal{F}_{\mathcal{X}}} \expect_{X \sim \mu_y} \big[ \mathds{1}\left\{g\left(\phi_y(X)\right)y \leq 0 \right\} - \lambda \norm{X-\phi_y(X)} - \infty \mathds{1}\left\{ \norm{X-\phi_y(X)} > \epsilon_2 \right\} \big]
\intertext{Finding $\phi_1$ and $\phi_1$ are two independent optimization problems, hence, we focus on characterizing $\phi_1$ (\emph{i.e.} $y=1$).}
    &\sup\limits_{\phi_1 \in \mathcal{F}_{\mathcal{X}}} \expect_{X \sim \mu_1} \big[ \mathds{1}\left\{g\left(\phi_1(X)\right) \leq 0 \right\} - \lambda \norm{X-\phi_1(X)} - \infty \mathds{1}\left\{ \norm{X-\phi_1(X)} > \epsilon_2 \right\} \big] \\
    = & \expect_{X \sim \mu_1} \left[ \essup\limits_{z \in B_{\norm{.}}(X,\epsilon_2)} \mathds{1}(g(z) \leq 0) - \lambda \norm{X-z} \right] 
    \\
    = & \int\limits_{\mathcal{X}} \essup\limits_{z \in B_{\norm{.}}(x,\epsilon_2)}  \mathds{1}\left\{g(z) \leq 0 \right \} - \lambda \norm{x-z}~~d\mu_1(x).
\intertext{
Let us now consider $(H_j)_{j \in J}$ a partition of $\mathcal{X}$, we can write.} 
&\sup\limits_{\phi_1 \in \mathcal{F}_{\mathcal{X}}} \expect_{X \sim \mu_1} \big[ \mathds{1}\left\{g\left(\phi_1(X)\right) \leq 0 \right\} - \lambda \norm{X-\phi_1(X)} - \infty \mathds{1}\left\{ \norm{X-\phi_1(X)} > \epsilon_2 \right\} \big] \\
    = & \sum\limits_{j \in J} \int\limits_{H_j} \essup\limits_{z \in B_{\norm{.}}(x,\epsilon_2)}  \mathds{1}\left\{g(z) \leq 0 \right \} - \lambda \norm{x-z}~~d\mu_1(x)
\end{align}

In particular, we consider here $H_0 =P_h^{\complement}$, $H_1=P_h \setminus P_h(\epsilon_2)$, and $H_2=P_h(\epsilon_2)$. 

\paragraph{For $x \in H_{0} = P_h^{\complement}$.} Taking $z=x$ we get $\mathds{1}\left\{g(z) \leq 0 \right \} - \lambda \norm{x-z} = 1$. Since for any $z \in \mathcal{X}$ we have $\mathds{1}\left\{g(z) \leq 0 \right \} - \lambda \norm{x-z} \leq 1$, this strategy is optimal. Furthermore, for any other optimal strategy $z'$, we would have $\norm{x-z'} = 0$, hence $z'=x$, and an optimal attack will never move the points of $H_{0}=P_h^{\complement}$.
   
\paragraph{For $x \in H_{1} = P_h \setminus P_h(\epsilon_2)$.} We have $B_{\norm{.}}(x,\epsilon_2) \subset P_h$ by definition of $P_h(\epsilon_2)$. Hence, for any $z \in B_{\norm{.}}(x,\epsilon_2)$, one gets $g(z) > 0$. Then $\mathds{1}\left\{g(z) \leq 0 \right \} - \lambda \norm{x-z} \leq 0$. The only optimal $z$ will thus be $z=x$, giving value $0$.

\paragraph{Let us now consider $x \in H_{2} = P_h(\epsilon_2)$ which is the interesting case where an attack is possible.} We know that $B_{\norm{.}}(x,\epsilon_2) \cap P_h^{\complement} \neq \emptyset$, and for any $z$ in this intersection, $\mathds{1}(g(z) \leq 0) = 1$. Hence :
    \begin{align}
        \essup\limits_{z \in B_{\norm{.}}(x,\epsilon_2)} \mathds{1}\left\{g(z) \leq 0 \right \} - \lambda \norm{x-z}
        = & \max(1 - \lambda \essinf_{z \in B_{\norm{.}}(x,\epsilon_2)\cap P_h^{\complement}} \norm{x-z}, 0)
        \\
        = & \max(1 - \lambda \pi_{ B_{\norm{.}}(x,\epsilon_2) \cap P_h^{\complement} }(x), 0)
    \end{align}
    Where $\pi_{ B_{\norm{.}}(x,\epsilon_2) \cap P_h^{\complement} }$ is the projection on the closure of $B_{\norm{.}}(x,\epsilon_2) \cap P_h^{\complement}$. Note that $\pi_{ B_{\norm{.}}(x,\epsilon_2) \cap P_h^{\complement} }$ exists: $g$ is continuous, so $B_{\norm{.}}(x,\epsilon_2) \cap P_h^{\complement}$ is a closed set, bounded, and thus compact, since we are in finite dimension. The projection is however not guaranteed to be unique since we have no evidence on the convexity of the set. Finally, let us remark that, since $\lambda \in (0,1)$, and $\epsilon_2 \leq 1$, one has $1 - \lambda \pi_{ B_{\norm{.}}(x,\epsilon_2) \cap P_h^{\complement} }(x) \geq 0$ for any $x \in H_2$.
Hence, on $P_h(\epsilon_2)$, the optimal attack projects all the points on the decision boundary. For simplicity, and since there is no ambiguity, we write the projection $\pi$.

\paragraph{Finally.} Since $H_0 \cup H_1 \cup H_2 =\mathcal{X}$, Lemma~\ref{lemma:BRattCW} holds. Furthermore, the score for this optimal attack is:
\begin{align}
&\sup\limits_{\phi \in \left(\mathcal{F}_{\mathcal{X} \vert \epsilon_{2}}\right)^2} \scoreregCW(h,\phi) \\
= & \sum\limits_{y=\pm 1} \nu_y \sum\limits_{j \in J} \int\limits_{H_j} \essup\limits_{z \in B_{\norm{.}}(x,\epsilon_2)}  \mathds{1}\left\{g(z)y \leq 0 \right \} - \lambda \norm{x-z} ~~d\mu_y(x)
\intertext{Since the value is $0$ on $P_h \setminus P_h(\epsilon_2)$ (resp. on $N_h \setminus N_h(\epsilon_2)$ ) for $\phi_1$ (resp. $\phi_{\text{-}1}$), one gets:}
= &  \nu_1 \left[ \int\limits_{P_h(\epsilon_2)} \big( 1-\lambda \norm{x- \pi(x)} \big) d\mu_1(x)  + \int\limits_{P_h^{\complement}} 1 d\mu_1(x) \right] + \nu_{\text{-}1}\left[ \int\limits_{N_h(\epsilon_2)} \big( 1-\lambda  \norm{x- \pi(x)} \big) d\mu_{\text{-}1}(x)  + \int\limits_{N_h^{\complement}} 1d\mu_{\text{-}1}(x)\right]
    \\
= &  \nu_1 \left[ \int\limits_{P_h(\epsilon_2)} \big( 1-\lambda \norm{x- \pi(x)} \big) d\mu_1(x) + \mu_1(P_h^{\complement}) \right] + \nu_{\text{-}1}\left[ \int\limits_{N_h(\epsilon_2)} \big( 1-\lambda  \norm{x- \pi(x)} \big) d\mu_{\text{-}1}(x) + \mu_{\text{-}1}(N_h^{\complement}) \right]
    \\
= & \Risk(h) + \nu_1 \int\limits_{P_h(\epsilon_2)} \big( 1-\lambda  \norm{x- \pi(x)} \big) d\mu_1(x) +\nu_{\text{-}1} \int\limits_{N_h(\epsilon_2)} \big( 1-\lambda \norm{x- \pi(x)} \big) d\mu_{\text{-}1}(x)
\end{align}

(16) holds since $\Risk(h) = \mathbb{P}(h(X) \neq Y)\mathbb{P}(g(X)Y \leq 0) =  \nu_1 \mu_1(P_h^{\complement}) + \nu_{\text{-}1} \mu_{\text{-}1}(N_h^{\complement})$. This provides an interesting decomposition of the adversarial risk into the risk without attack and the loss on the attack zone.

\end{proof}

\setcounter{lemma}{0}

\begin{lemma}
\label{lemma:BRattMass}
Let $h \in \mathcal{H}$ and $\boldsymbol{\phi} \in \BRD_{\penMass}(h)$. Then the following assertion holds: $$
    \left\{
    \begin{array}{ll}
    \phi_1(x) \in (P_h)^{\complement} &\text{ if }  x \in P_h(\epsilon_2) \\
    \phi_1(x) = x &\text{ otherwise}.
    \end{array}
    \right.
$$ 
Where $(P_h)^{\complement}$, the complement of $P_h$ in $\mathcal{X}$. $\phi_{\text{-}1}$ is characterized symmetrically.
\end{lemma}

\begin{proof} Following the same proof schema as before the adversarial risk writes as follows:  \begin{align}
    &\sup\limits_{\boldsymbol{\phi} \in \left(\mathcal{F}_{\mathcal{X} \vert \epsilon_{2}}\right)^2} \scoreregMass(h,\phi) \\
    = & \sup\limits_{\boldsymbol{\phi} \in \left(\mathcal{F}_{\mathcal{X} }\right)^2} \sum\limits_{y=\pm 1} \nu_y \expect_{X \sim \mu_y} \left[ \mathds{1}\left\{h\left(\phi_y(X)\right) \neq y \right\} - \lambda \mathds{1}\left\{X \neq \phi_y(X) \right\} - \infty \mathds{1}\left\{ \norm{X-\phi_y(X)} > \epsilon_2 \right\} \right]
    \\
    = & \sup\limits_{\boldsymbol{\phi} \in \left(\mathcal{F}_{\mathcal{X} }\right)^2} \sum\limits_{y=\pm 1} \nu_y \expect_{X \sim \mu_y} \left[ \mathds{1}\left\{g\left(\phi_y(X)\right)y \leq 0 \right\} - \lambda \mathds{1}\left\{X \neq \phi_y(X) \right\} - \infty \mathds{1}\left\{ \norm{X-\phi_y(X)} > \epsilon_2 \right\} \right]
    \\
    = &  \sum\limits_{y=\pm 1} \nu_y \sup\limits_{\phi_y \in \mathcal{F}_{\mathcal{X}}} \expect_{X \sim \mu_y} \left[ \mathds{1}\left\{g\left(\phi_y(X)\right)y \leq 0 \right\} - \lambda \mathds{1}\left\{X \neq \phi_y(X) \right\} - \infty \mathds{1}\left\{ \norm{X-\phi_y(X)} > \epsilon_2 \right\} \right]
\intertext{Finding $\phi_1$ and $\phi_1$ are two independent optimization problem, hence we focus on characterizing $\phi_1$ (\emph{i.e.} $y=1$).} 
    &\sup\limits_{\phi_1 \in \mathcal{F}_{\mathcal{X}}} \expect_{X \sim \mu_1} \big[ \mathds{1}\left\{g\left(\phi_1(X)\right) \leq 0 \right\} - \lambda \mathds{1}\left\{X \neq \phi_1(X) \right\} - \infty \mathds{1}\left\{ \norm{X-\phi_1(X)} > \epsilon_2 \right\} \big] \\
    = & \expect_{X \sim \mu_1} \left[ \essup\limits_{z \in B_{\norm{.}}(X,\epsilon_2)} \mathds{1}\left\{ g(z) \leq 0 \right\} - \lambda \mathds{1}\left\{X \neq z \right\} \right] 
    \\
    = & \int\limits_{\mathcal{X}} \essup\limits_{z \in B_{\norm{.}}(x,\epsilon_2)}  \mathds{1}\left\{g(z) \leq 0 \right \} - \lambda \mathds{1}\left\{x \neq z \right\}~~d\mu_1(x).
\intertext{Let us now consider $(H_j)_{j \in J}$ a partition of $\mathcal{X}$, we can write. }
&\sup\limits_{\phi_1 \in \mathcal{F}_{\mathcal{X}}} \expect_{X \sim \mu_1} \big[ \mathds{1}\left\{g\left(\phi_1(X)\right) \leq 0 \right\} - \lambda \mathds{1}\left\{X \neq \phi_1(X) \right\} - \infty \mathds{1}\left\{ \norm{X-\phi_1(X)} > \epsilon_2 \right\} \big] \\
    = & \sum\limits_{j \in J} \int\limits_{H_j} \essup\limits_{z \in B_{\norm{.}}(x,\epsilon_2)}  \mathds{1}\left\{g(z) \leq 0 \right \} - \lambda \mathds{1}\left\{x \neq z \right\}~~d\mu_1(x)
\end{align}

In particular, we can take $H_0 = P_h^{\complement}$, $H_1=P_h \setminus P_h(\epsilon_2)$, and $H_2=P_h(\epsilon_2)$. 

\paragraph{For $x\in H_0 = P_h^{\complement}$ or $x \in H_1 = P_h \setminus P_h(\epsilon_2)$.} With the same reasoning as before, any optimal attack will choose $\phi_1(x)=x$.

\paragraph{Let $x\in H_2 = P_h(\epsilon_2)$.} We know that $B_{\norm{.}}(x,\epsilon_2) \cap P_h^{\complement} \neq \emptyset$, and for any $z$ in this intersection, one has $g(z) \leq 0$ and $z \neq x$. Hence $ \essup\limits_{z \in B_{\norm{.}}(x,\epsilon_2)} \mathds{1}\left \{g(z) \leq 0 \right \} - \lambda \mathds{1}\left \{z\neq x\right \} = \max(1 - \lambda, 0)$. Since $\lambda \in (0,1)$ one has $\mathds{1}\left \{g(z) \leq 0 \right \} - \lambda \mathds{1}\left \{z\neq x\right \} = 1 -\lambda $ for any $z \in B_{\norm{.}}(x,\epsilon_2) \cap P_h^{\complement}$. Then any function that given a $x \in \mathcal{X}$ outputs $\phi_1(x) \in B_{\norm{.}}(x,\epsilon_2) \cap P_h^{\complement} $ is optimal on $H_2$.

\paragraph{Finally.} Since $H_0 \cup H_1 \cup H_2 =\mathcal{X}$, Lemma~\ref{lemma:BRattMass} holds.

\end{proof}

\setcounter{lemma}{2}

\begin{lemma}
\label{lemma:BRDefender}
Let us consider $\boldsymbol{\phi} \in \left(\mathcal{F}_{\mathcal{X} \vert \epsilon_{2}}\right)^2$. If we take $h \in \BRD(\boldsymbol{\phi})$, then for $y =1$ (resp. $y=\text{-}1$), and for any $B \subset P_h$ (resp. $B \subset N_h$) one has $$ \mathbb{P}(Y=y \vert X \in B) \geq \mathbb{P}(Y= -y \vert X \in B) $$ with $Y \sim \nu$ and for all $y \in \mathcal{Y}$, $X|(Y=y) \sim \phi_y \# \mu_y $.
\end{lemma}

\begin{proof}
We reason ad absurdum. Let us consider $y = 1$, the proof for $y=-1$ is symmetrical. Let us suppose that there exists $C \subset P_h$ such that $ \nu_{\text{-}1} \phi_{\text{-}1} \# \mu_{\text{-}1} (C) > \nu_{1}\phi_{1} \# \mu_{1}(C)$. We can then construct $h_1$ as follows: 
$$
    h_1 (x) = \left\{
    \begin{array}{ll}
    h(x) &\text{ if } x \notin C\\
    -1 &\text{ otherwise}.
    \end{array}
    \right.
$$

Since $h$ and $h_1$ are identical outside $C$, the difference between the adversarial risks of $h$ and $h_1$ writes as follows:
\begin{align}
    &\scoreregMass(h, \phi) - \scoreregMass(h_1,\phi) \\
    = & \sum\limits_{y = \pm 1} \nu_y \int\limits_{C} \big(\mathds{1}\left\{h(x) \neq y\right\} - \mathds{1}\left\{h_1(x) \neq y\right\} \big)~~d(\phi_y \# \mu_y)(x) \\
    = & \nu_{-1} \mathds{1}\left\{h(x) = 1\right\} \phi_{-1} \# \mu_{\text{-}1}(C) - \nu_{1} \mathds{1}\left\{h_1(x) \neq 1\right\} \phi_{1} \# \mu_{1} (C) \\
    = & \nu_{-1} \phi_{-1} \# \mu_{\text{-}1} (C) - \nu_{1} \phi_{1} \# \mu_{1}(C)
\end{align}
Since by hypothesis $\nu_{-1} \phi_{-1} \# \mu_{\text{-}1} (C) > \nu_{1} \phi_{1} \# \mu_{1}(C)$ the difference between the adversarial risks of $h$ and $h_1$ is strictly positive. This means that $h_1$ gives strictly better adversarial risk than the best response $h$. Since, by definition $h$ is supposed to be optimal, this leads to a contradiction. Hence Lemma~\ref{lemma:BRDefender} holds.
\end{proof}

\begin{addLemma}
\label{lemma:bonusBayesClassifier}
Let us assume that there is a probability measure $\zeta$ that dominates both $\phi_1 \# \mu_1$ and $\phi_{\text{-}1} \# \mu_{\text{-}1}$. 
Let us consider $\boldsymbol{\phi} \in \left(\mathcal{F}_{\mathcal{X} \vert \epsilon_{2}}\right)^2$. If we take $h \in \BRD(\boldsymbol{\phi})$, then $h$ is the Bayes Optimal Classifier for the distribution characterized by $(\nu, \phi_1 \# \mu_1, \phi_{\text{-}1} \# \mu_{\text{-}1})$.
\end{addLemma}

\begin{proof}
For simplicity, we denote $f_1 = \frac{(d\phi_{1} \# \mu_1)}{d\zeta}$ and $f_{-1} = \frac{d(\phi_{-1} \# \mu_{\text{-}1})}{d\zeta}$ the Radon-Nikodym derivatives of $\phi_{1} \# \mu_{1}$ and $\phi_{-1} \# \mu_{\text{-}1}$ w.r.t. $\zeta$. The best response $h$ minimizes adversarial risk under attack $\boldsymbol{\phi}$. This minimal risk writes:
\begin{align}
    &\inf\limits_{h\in \mathcal{H}}\scoreregMass(h,\phi) \\
    = & \inf\limits_{h\in \mathcal{H}} \sum\limits_{y=\pm 1} \nu_y \expect_{x \sim \mu_y} \left[ \mathds{1}\left\{h(\phi_y(x)) \neq y\right\} \right] - \lambda \pen\left(\boldsymbol{\phi}\right) .
    \intertext{Since the the penalty function does not depend on $h$, it suffices to seek $ \inf\limits_{h\in \mathcal{H}} \sum\limits_{y=\pm 1} \nu_y \int\limits_{\mathcal{X}} \mathds{1}\left\{h(x) \neq y\right \}~~ d(\phi_y \# \mu_y)(x)$. 
    Moreover thanks to the transfer theorem, one gets the following:}
    &\inf\limits_{h\in \mathcal{H}} \sum\limits_{y=\pm 1} \nu_y \int\limits_{\mathcal{X}}  \mathds{1}\left\{h(x) \neq y\right \} ~~d(\phi_y \# \mu_y)(x)\\
    = & \inf\limits_{h\in \mathcal{H}} \sum\limits_{y=\pm 1} \nu_y \int\limits_{\mathcal{X}} \mathds{1}\left \{h(x) \neq y \right\} f_y(x)~~d\zeta(x) \\
    = & \inf\limits_{h\in \mathcal{H}} \int\limits_{\mathcal{X}} \sum\limits_{y=\pm 1} \nu_y \mathds{1}\left \{h(x) \neq y \right \}f_y(x) ~~d\zeta(x). 
    \intertext{Finally, since the integral is bounded we get:}
    &\inf\limits_{h\in \mathcal{H}} \int\limits_{\mathcal{X}} \sum\limits_{y=\pm 1} \nu_y \mathds{1}\left \{h(x) \neq y \right \}f_y(x) ~~d\zeta(x) \\ = & \int\limits_{\mathcal{X}} \left[ \inf\limits_{h\in \mathcal{H}} \sum\limits_{y=\pm 1} \nu_y \mathds{1}\left\{h(x) \neq y\right\} f_y(x) \right]d\zeta(x).
\end{align}

 Hence, the best response $h$ is such that for every $x\in \mathcal{X}$, and $y \in \mathcal{Y}$, one has $h(x) = y$ if and only if $f_y(x) \leq f_{-y}(x)$. Thus, $h$ is the optimal Bayes classifier for the distribution $(\nu, \phi_1 \# \mu_1, \phi_{\text{-}1} \# \mu_{\text{-}1})$.
Furthermore, for $y =1$ (resp. $y=\text{-}1$), and for any $B \subset P_h$ (resp. $B \subset N_h$) one has:$$ \mathbb{P}(Y=y \vert X \in B) \geq \mathbb{P}(Y= -y \vert X \in B) $$ with $Y \sim \nu$ and for all $y \in \mathcal{Y}$, $X|(Y=y) \sim  \phi_y \# \mu_y$.

\end{proof}

\begin{theorem}[Non-existence of a pure Nash equilibrium]
\label{theorem::PureNashEquilibrium}
In our zero-sum game with $\lambda \in (0,1)$ and penalty $\Omega \in \{ \penMass,\penCW \}$, there is no Pure Nash Equilibrium.
\end{theorem}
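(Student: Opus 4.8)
The plan is to argue by contradiction. Suppose $(h,\boldsymbol{\phi})$ is a Pure Nash Equilibrium; then simultaneously $\boldsymbol{\phi}\in\BRD_{\pen}(h)$, so $\boldsymbol{\phi}$ has the structure of Lemma~\ref{lemma:BRattMass} (mass penalty) or Lemma~\ref{lemma:BRattCW} (norm penalty), and $h\in\BRD(\boldsymbol{\phi})$, so $h$ obeys the domination property of Lemma~\ref{lemma:BRDefender} on the transported distribution $(\nu,\phi_1\#\mu_1,\phi_{\text{-}1}\#\mu_{\text{-}1})$. I would first rule out the degenerate case $P_h=\emptyset$ or $N_h=\emptyset$: if, say, $N_h=\emptyset$, the adversary can push nothing across the empty opposite region, so its best response is the identity, and the best response to the identity attack is the Bayes classifier for $(\nu,\mu_1,\mu_{\text{-}1})$, which is non-constant in any non-degenerate instance, contradicting $h$ being constant. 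Hence both $P_h$ and $N_h$ are non-empty open sets, and since $\mu_1,\mu_{\text{-}1}$ have full support, $\mu_1(P_h(\epsilon_2))>0$ and $\mu_{\text{-}1}(N_h(\epsilon_2))>0$.

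For the mass penalty I would test the defender's optimality on $B=P_h(\epsilon_2)\subset P_h$. By Lemma~\ref{lemma:BRattMass}, $\phi_1$ sends every point of $P_h(\epsilon_2)$ into $N_h$ and fixes every other point, so $\phi_1^{\text{-}1}(P_h(\epsilon_2))=\emptyset$ and thus $\phi_1\#\mu_1\big(P_h(\epsilon_2)\big)=0$. Conversely every $x\in N_h(\epsilon_2)$ is moved to a point $\phi_{\text{-}1}(x)\in P_h$ with $\norm{\phi_{\text{-}1}(x)-x}_2\leq\epsilon_2$; since $x\in N_h$ witnesses $\phi_{\text{-}1}(x)\in P_h(\epsilon_2)$, we get $N_h(\epsilon_2)\subseteq\phi_{\text{-}1}^{\text{-}1}(P_h(\epsilon_2))$ and hence $\phi_{\text{-}1}\#\mu_{\text{-}1}\big(P_h(\epsilon_2)\big)\geq\mu_{\text{-}1}(N_h(\epsilon_2))>0$. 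Feeding $B=P_h(\epsilon_2)$ into Lemma~\ref{lemma:BRDefender} would force $\nu_1\,\phi_1\#\mu_1(B)\geq\nu_{\text{-}1}\,\phi_{\text{-}1}\#\mu_{\text{-}1}(B)$, i.e. $0\geq\nu_{\text{-}1}\,\phi_{\text{-}1}\#\mu_{\text{-}1}(B)>0$, a contradiction. So no Pure Nash Equilibrium exists for $\penMass$.

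For the norm penalty this test set no longer yields the contradiction, because by Lemma~\ref{lemma:BRattCW} the attacked points of $P_h(\epsilon_2)$ are projected \emph{onto} the zero level set $\Gamma=\{g=0\}$ rather than strictly across it. I would instead argue directly from the defender's optimality. After the attack, $\phi_1\#\mu_1$ places the positive mass $a=\nu_1\mu_1(P_h(\epsilon_2))$ onto $\Gamma$, where $h=\sign(g)=0\neq 1$, so all of it is misclassified. I would then compare $h$ with $h'=\sign(g+c)$ for a small $c>0$: on $\Gamma$ one has $h'=+1$, so $h'$ recovers the whole mass $a$; the only points whose label changes lie in the shell $\{-c<g<0\}$, but for $c$ small enough every such point is within $\epsilon_2$ of $\Gamma\subseteq\overline{P_h}$, hence belongs to $N_h(\epsilon_2)$, so its class-$\text{-}1$ mass has already been relocated onto $\Gamma$ by $\phi_{\text{-}1}$ and contributes no new error. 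Thus $h'$ strictly lowers the transported risk, contradicting $h\in\BRD(\boldsymbol{\phi})$.

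The main obstacle is the norm penalty: it cannot be closed by a single application of Lemma~\ref{lemma:BRDefender}, since the adversary concentrates the mass of both classes exactly on $\Gamma=\{g=0\}$, a set lying in neither $P_h$ nor $N_h$. The delicate step is certifying that the infinitesimal shift of the level set incurs no compensating loss, i.e. that the shell $\{-c<g<0\}$ is contained in $N_h(\epsilon_2)$ for small $c$; this requires passing from a small value $|g|$ to a small spatial distance to the boundary, which leans on the continuity of $g$ (with extra care if $\mathcal{X}$ is unbounded) and on $\pi$ genuinely projecting onto $\{g=0\}$. The constant-classifier degeneracy is a minor secondary point, dispatched as in the first step.
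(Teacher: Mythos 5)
Your mass-penalty argument is essentially the paper's proof: feed a test set $B\subset P_h$ into Lemma~\ref{lemma:BRDefender} and contradict the domination property. But it has a small leak, and that leak is exactly what derails you on the norm penalty. Lemma~\ref{lemma:BRattMass} only guarantees $\phi_{-1}(x)\in (N_h)^{\complement}=P_h\cup\{g=0\}$ for $x\in N_h(\epsilon_2)$, not $\phi_{-1}(x)\in P_h$: since a point placed on $\{g=0\}$ is misclassified anyway (there $h=0\neq -1$), a best response may legitimately drop all attacked mass on $\{g=0\}$, in which case your inclusion $N_h(\epsilon_2)\subseteq\phi_{-1}^{-1}(P_h(\epsilon_2))$ fails. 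The robust lower bound --- the one the paper uses --- comes from the \emph{untouched} mass rather than the transported mass: $\phi_{-1}$ is the identity outside $N_h(\epsilon_2)$, in particular on $P_h(\epsilon_2)$, so $\phi_{-1}\#\mu_{-1}\left(P_h(\epsilon_2)\right)\geq\mu_{-1}\left(P_h(\epsilon_2)\right)>0$ by full support. With this bound, your claimed obstacle for $\penCW$ evaporates: under the norm penalty the attacked class-$1$ mass is projected onto $\{g=0\}\subseteq (P_h)^{\complement}$, which is still disjoint from $B=P_h(\epsilon_2)$, so $\phi_1\#\mu_1(B)=0$ exactly as in the mass case, while the untouched class-$(-1)$ mass in $B$ is still positive. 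The same two-line contradiction with Lemma~\ref{lemma:BRDefender} therefore closes \emph{both} penalties uniformly; this is precisely the paper's proof, and your case split is an artifact of lower-bounding by transported instead of untouched mass.

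Consequently, the substitute argument you build for $\penCW$ (shifting the level set, $h'=\sign(g+c)$) is both unnecessary and, as it stands, a genuine gap --- as you half-concede. It needs (i) small $|g|$ to imply small distance to $\Gamma=\{g=0\}$, which continuity alone does not give on a non-compact $\mathcal{X}$ (take $g(x)=-x/(1+x^2)$ on $\mathbb{R}$: for every $c>0$ the shell $\{-c<g<0\}$ contains points arbitrarily far from $\Gamma$); (ii) $\Gamma\subseteq\overline{P_h}$, which can fail, e.g.\ where $g$ touches $0$ from below; and (iii) the whole strip $\{-c\leq g<0\}$ (including $\{g=-c\}$, where $h'$ outputs $0$) to lie inside $N_h(\epsilon_2)$, which is what actually guarantees no new class-$(-1)$ errors appear. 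None of these is resolved, so the norm-penalty half of the theorem is not established by your proposal. A final shared caveat: your step ``$P_h,N_h\neq\emptyset$ plus full support implies $\mu_1\left(P_h(\epsilon_2)\right)>0$'' is not justified (the two regions could lie at distance greater than $\epsilon_2$, separated by a thick $\{g=0\}$ region, making $P_h(\epsilon_2)$ empty); the paper's proof silently assumes the same thing when it asserts $\mu_{-1}\left(P_h(\delta)\right)>0$, so this is a pre-existing blemish rather than a new one, but your explicit handling of constant classifiers does not by itself remove it.
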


\begin{proof}
Let $h$ be a classifier, $\phi \in \BRA(h)$ an optimal attack against $h$. We will show that $h \notin \BRD(\boldsymbol{\phi})$, i.e. that $h$ does not satisfy the condition from Lemma~\ref{lemma:BRDefender}. This suffices for Theorem~\ref{theorem::PureNashEquilibrium} to hold since it implies that there is no $(h, \boldsymbol{\phi}) \in \mathcal{H} \times \left(\mathcal{F}_{\mathcal{X} \vert \epsilon_{2}}\right)^2 $ such that
$h \in \BRD(\boldsymbol{\phi}) \text{ \emph{and} }\boldsymbol{\phi} \in \BRA(h).$

According to Lemmas~\ref{lemma:BRattMass} and~\ref{lemma:BRattCW}, whatever penalty we use, there exists $\delta >0$ such that $\phi_1 \# \mu_1\left(P_h(\delta)\right) = 0$ or $\phi_{-1} \# \mu_{\text{-}1}\left(N_h(\delta)\right) = 0$. Both cases are symmetrical, so let us assume that $P_h(\delta)$ is of null measure for the transported distribution conditioned by $y=1$. Furthermore we have $\phi_{-1} \# \mu_{\text{-}1}\left(P_h(\delta)\right) = \mu_{\text{-}1}\left(P_h(\delta)\right) > 0$ since $\phi_{-1}$ is the identity function on $P_h(\delta)$, and since $\mu_{\text{-}1}$ is of full support on $\mathcal{X}$. Hence we get the following:
\begin{align}
    & \phi_{-1} \# \mu_{\text{-}1}\left(P_h(\delta)\right) > \phi_1 \# \mu_{1}\left(P_h(\delta)\right). \intertext{Since the right side of the inequality is null, we also get:}
& \phi_{-1} \# \mu_{\text{-}1}\left(P_h(\delta)\right) \nu_{\text{-}1} > \phi_1 \# \mu_{1}\left(P_h(\delta)\right)  \nu_1.
\end{align}   
This inequality is incompatible with the characterization of best response for the Defender of Lemma~\ref{lemma:BRDefender}. Hence $h \notin \BRD(\boldsymbol{\phi})$.

\end{proof}

\begin{theorem}(Randomization matters)
\label{theorem:MixedRegime}
Let us consider $h_1 \in \mathcal{H}$, $\lambda \in (0,1)$, $\pen = \penMass$, $\boldsymbol{\phi} \in \BRD_{\pen}(h_1)$ and $h_2 \in \BRD(\boldsymbol{\phi})$. Then for any $\alpha \in (\max(\lambda, 1-\lambda),1)$ and for any $\boldsymbol{\phi}' \in \BRD_{\pen}(m^{\mathbf{q}}_{\mathbf{h}})$ one has
\begin{equation*}
     \scoreregMass(m^{\mathbf{q}}_{\mathbf{h}}, \boldsymbol{\phi}') < \scoreregMass(h_1, \boldsymbol{\phi}).
\end{equation*}
Where $\mathbf{h}=(h_1,h_2)$, $\mathbf{q}=(\alpha, 1-\alpha)$, and $m^{\mathbf{q}}_{\mathbf{h}}$ is the mixture of $\mathbf{h}$ by $\mathbf{q}$.
\end{theorem}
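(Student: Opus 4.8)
The plan is to reduce both sides of the inequality to closed forms by exploiting that, once the Defender's classifier is fixed, the penalized adversarial risk separates over the two labels and pointwise over $\mathcal{X}$, so the optimal attack can be read off point by point. For a (possibly mixed) classifier $m$ and label $y$, the mass-penalized best-response value is $\int_{\mathcal{X}} \essup_{z \in B(x,\epsilon_2)}\big[ m(z)(\text{-}y) - \lambda \mathds{1}\{z\neq x\}\big]\,d\mu_y(x)$, exactly as in the proof of Lemma~\ref{lemma:BRattMass}. Applying this to $h_1$ and its best response $\boldsymbol{\phi}$ gives $\scoreregMass(h_1,\boldsymbol{\phi}) = \Risk(h_1) + (1-\lambda)\big[\nu_1 \mu_1(P_{h_1}(\epsilon_2)) + \nu_{\text{-}1}\mu_{\text{-}1}(N_{h_1}(\epsilon_2))\big]$, since on the attackable bands each attacked point contributes $1-\lambda$ while the attack is inactive elsewhere. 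I would first record this identity.

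The crucial structural step is to locate $h_2$ on the two attackable bands. By Lemma~\ref{lemma:BRattMass}, $\boldsymbol{\phi}$ transports all class-$1$ mass out of $P_{h_1}(\epsilon_2)$, so $\phi_1\#\mu_1(P_{h_1}(\epsilon_2))=0$, whereas $\phi_{\text{-}1}$ fixes every point of $P_{h_1}\supseteq P_{h_1}(\epsilon_2)$, so the transported class-$\text{-}1$ mass there is at least $\mu_{\text{-}1}(P_{h_1}(\epsilon_2))>0$ by full support. Lemma~\ref{lemma:BRDefender} then forces $h_2=\text{-}1$ on $P_{h_1}(\epsilon_2)$, and symmetrically $h_2=1$ on $N_{h_1}(\epsilon_2)$. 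Thus on exactly the zones where $h_1$ was attackable, $h_2$ disagrees with $h_1$, which is the mechanism that makes the mixture harder to attack.

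Next I would evaluate the mixture's pointwise value $V_y^m(x)=\essup_{z\in B(x,\epsilon_2)}[m(z)(\text{-}y)-\lambda\mathds{1}\{z\neq x\}]$ on a partition of $\mathcal{X}$ into the two bands and the two deep regions $P_{h_1}\setminus P_{h_1}(\epsilon_2)$ and $N_{h_1}\setminus N_{h_1}(\epsilon_2)$. Here the hypothesis $\alpha \in (\max(\lambda,1-\lambda),1)$ does the work twice. On a band, a class-$1$ point sits at misclassification probability $1-\alpha$, and the only improvement reachable within $\epsilon_2$ is the opposite band, of value $\alpha$ (a point crossing the $h_1$-boundary from $P_{h_1}(\epsilon_2)$ necessarily lands in $N_{h_1}(\epsilon_2)$, where $h_2=1$); since $\alpha>\lambda$ and $\alpha<1$, the resulting value $\max(1-\alpha,\alpha-\lambda)$ is strictly below the $1-\lambda$ that $h_1$ suffered. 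In a deep region the mixture's misclassification probability is either $\le 1-\alpha<\lambda$ (so moving never pays, because $\alpha>1-\lambda$) or already $\ge\alpha>1-\lambda$ (so pushing to a full misclassification is not worth the penalty $\lambda$); either way the Adversary stays and $V_y^m(x)=m(x)(\text{-}y)$. Carrying this out yields an explicit expression for $\scoreregMass(m^{\mathbf{q}}_{\mathbf{h}},\boldsymbol{\phi}')$.

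Finally I would form the difference $\scoreregMass(h_1,\boldsymbol{\phi})-\scoreregMass(m^{\mathbf{q}}_{\mathbf{h}},\boldsymbol{\phi}')$ and group it by region. The band contributions are strictly positive, with per-mass gain $\min(\alpha-\lambda,1-\alpha)>0$ on each attacked band, plus a further $(1-\alpha)$ gain from the opposite label there. I expect the main obstacle to be the deep regions, where the mixture can be locally worse than $h_1$ on $\{h_2=\text{-}1\}\cap(P_{h_1}\setminus P_{h_1}(\epsilon_2))$ and on $\{h_2=1\}\cap(N_{h_1}\setminus N_{h_1}(\epsilon_2))$. The key observation is that on these sets the transported distribution coincides with the original one (the attack does not touch them), so Lemma~\ref{lemma:BRDefender} applied to $h_2$ gives $\nu_{\text{-}1}\mu_{\text{-}1}\ge\nu_1\mu_1$ on the first set and $\nu_1\mu_1\ge\nu_{\text{-}1}\mu_{\text{-}1}$ on the second; after weighting the two labels, these local losses are exactly matched (indeed dominated) by the corresponding gains, making the net deep-region contribution nonnegative. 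Since $\mu_1,\mu_{\text{-}1}$ have full support, at least one band has positive measure whenever $h_1$ has a nontrivial boundary, so the band gain is strictly positive and $\scoreregMass(m^{\mathbf{q}}_{\mathbf{h}},\boldsymbol{\phi}')<\scoreregMass(h_1,\boldsymbol{\phi})$.
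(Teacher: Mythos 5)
Your proposal is correct, and it reaches the conclusion by a route that genuinely differs from the paper's in its key structural choices, even though both share the same skeleton (pointwise essential-sup evaluation of the mass-penalized risk, a region-by-region comparison, and the two conditions $\alpha>\lambda$ and $\alpha>1-\lambda$). The paper never uses the best-response property of $h_2$: it \emph{constructs} $h_2$ explicitly by flipping $h_1$ only on $U=P_{h_1}(\epsilon_2)$ and keeping $h_1$ elsewhere, so the two scores can differ only on $U$, $U^{+}=(U\oplus\epsilon_2)\cap P_{h_1}\setminus U$ and $U^{-}=\phi_{\text{-}1}^{-1}(U)$, and the comparison is carried out on those three sets alone. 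With that choice, a class-$1$ point of $U$ can still be pushed into $N_{h_1}$, where \emph{both} classifiers answer $-1$, so the attacked-class term nets to zero (its value is $\max(1-\alpha,1-\lambda)=1-\lambda$), and the entire strict gain comes from the class-$(\text{-}1)$ mass on $U$, namely $(1-\max(\alpha,1-\lambda))\,\nu_{\text{-}1}\mu_{\text{-}1}(U)=(1-\alpha)\,\nu_{\text{-}1}\mu_{\text{-}1}(U)>0$. You instead use Lemma~\ref{lemma:BRDefender} to pin down what any $h_2\in\BRD(\boldsymbol{\phi})$ must look like: it flips \emph{both} bands (up to null sets), since after the attack of Lemma~\ref{lemma:BRattMass} the transported class-$1$ mass vanishes on $P_{h_1}(\epsilon_2)$ and dominates on $N_{h_1}(\epsilon_2)$. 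This buys you strict gains on both bands and from both labels ($\min(\alpha-\lambda,1-\alpha)$ and $1-\alpha$ per unit mass), and it makes your proof more faithful to the statement, which quantifies over $h_2\in\BRD(\boldsymbol{\phi})$: by your own argument, the paper's constructed $h_2$ (which keeps the value $-1$ on $N_{h_1}(\epsilon_2)$) is not in fact a best response to $\boldsymbol{\phi}$. The price is an obligation the paper's construction avoids: your $h_2$ may also disagree with $h_1$ deep inside $P_{h_1}$ and $N_{h_1}$, so you must check that the deep-region contribution is nonnegative; your argument there is sound --- the attack fixes those regions, so the transported conditionals coincide with $\mu_1,\mu_{\text{-}1}$, and Lemma~\ref{lemma:BRDefender} applied to $h_2$ gives exactly the domination $\nu_{\text{-}1}\mu_{\text{-}1}\geq\nu_1\mu_1$ on $\{h_2=\text{-}1\}$ (and symmetrically) that offsets the local $(1-\alpha)$ losses. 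One last point, common to both proofs: strictness requires the attacked band to carry positive mass (e.g.\ $\mu_{\text{-}1}(P_{h_1}(\epsilon_2))>0$); the paper leaves this implicit, whereas you correctly flag it as a non-degeneracy condition on the boundary of $h_1$.
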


\begin{figure*}[!ht]
    \centering
    \includegraphics[width=0.5\textwidth]{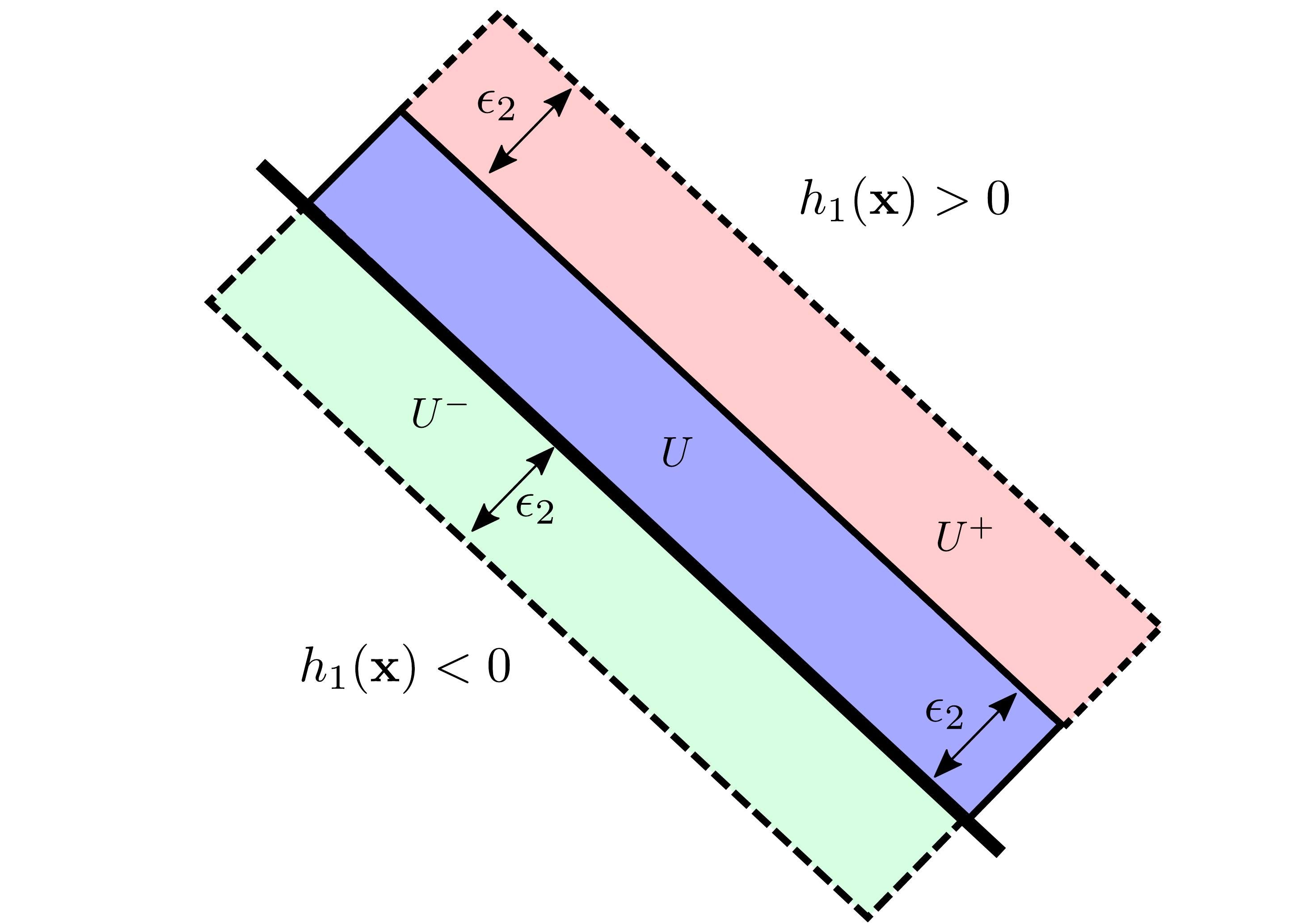}
    \caption{Illustration of the notations $U$, $U^+$, and $U^-$ for proof of Theorem~\ref{theorem:MixedRegime}.}
    \label{fig:preuve}
\end{figure*}

\begin{proof}
To demonstrate Theorem~\ref{theorem:MixedRegime}, let us denote $U = P_{h_1}(\epsilon_2)$ and define the \emph{$\epsilon_2$-dilation of $U$} as
$ U \oplus \epsilon_2 := \left\{ u + v \mid (u,v) \in U \times \mathcal{X} \text{ and } \norm{v}_p \leq \epsilon_2 \right\}.$ We can construct $h_2$ as follows
\begin{align*}
    h_2 (x) = \left\{
    \begin{array}{ll}
    -h_1(x) &\text{ if } x \in U\\
    h_1(x) &\text{ otherwise}.
    \end{array}
    \right.
\end{align*}
This means that $h_2$ changes the class of all points in $U$, and do not change the rest, compared to $h_1$. Then taking $\alpha \in \left(0,1 \right)$, we can define $m^{\boldsymbol{q}}_{\boldsymbol{h}}$, and $\boldsymbol{\phi}' \in \BRA(m^{\boldsymbol{q}}_{\boldsymbol{h}})$. We aim to find a condition on $\alpha$ so that the score of $m^{\boldsymbol{q}}_{\boldsymbol{h}}$ is lower than the score of $h_1$. Finally, let us recall that 
\begin{align*}
    &\scoreregMass(m^{\boldsymbol{q}}_{\boldsymbol{h}},\boldsymbol{\phi}') \\ =~& \nu_1  \int\limits_{\mathcal{X}} \essup\limits_{z \in B_{\norm{.}}(x,\epsilon_2)} \alpha \mathds{1} \left\{h_1(z) = \text{-}1 \right \} + (1- \alpha) \mathds{1}\left \{ h_2(z)  = \text{-}1  \right\} - \lambda \mathds{1} \left\{x \neq z\right\}~d\mu_1(x) \\
    +~& \nu_{\text{-}1}  \int\limits_{\mathcal{X}} \essup\limits_{z \in B_{\norm{.}}(x,\epsilon_2)} \alpha \mathds{1} \left\{h_1(z) = 1 0\right \} + (1- \alpha) \mathds{1}\left \{ h_2(z) = 1  \right\} - \lambda \mathds{1} \left\{x \neq z\right\}~d\mu_{\text{-}1}(x).
\end{align*}

The only terms that may vary between the score of $h_1$ and the score of $m^{\boldsymbol{q}}_{\boldsymbol{h}}$ are the integrals on $U$, $U \oplus \epsilon_2 \cap P_{h_1}$ and $\phi_{\text{-}1}^{-1}(U)$ -- inverse image of $U$ by $\phi_{\text{-}1}$. These sets represent respectively the points we mix on, the points that may become attacked -- when changing from $h_1$ to $m^{\boldsymbol{q}}_{\boldsymbol{h}}$ -- by moving them on $U$, and the ones that were -- for $h_1$ -- attacked before by moving them on $U$. Hence, for simplicity, we only write those terms. Furthermore, we denote $$ U^{+} := U \oplus \epsilon_2 \cap P_{h_1} \setminus U , \  U^{-} := \phi_{\text{-}1}^{-1}(U) \text{ and recall }
U := P_{h_1}(\epsilon_2). $$ One can refer to Figure~\ref{fig:preuve} for visual interpretation of this sets. We can now evaluate the worst case adversarial score for $h_1$ restricted to the above sets. Thanks to Lemma~\ref{lemma:BRattMass} that characterizes $\boldsymbol{\phi}$, we can write
\begin{align*}
&\scoreregMass(h_1,\boldsymbol{\phi})_{\mid U,\ U^+ ,\ U^-}\\ 
=~&
\left( 1 - \lambda \right) \times  \nu_1 \mu_1\left(U \right)
+ \nu_{\text{-}1} \mu_{\text{-}1}(U) \\[0.2cm]
+~& 0 \times \nu_1  \mu_1\left(U^+ \right) +  \nu_{\text{-}1} \mu_{\text{-}1} \left( U^+\right)
\\[0.2cm] 
+~&\nu_1\mu_1\left( U^- \right) + \left( 1 - \lambda \right) \times \nu_{\text{-}1}  \mu_{\text{-}1}\left( U^- \right).
\intertext{
Similarly, we can write the worst case adversarial score of the mixture on the sets we consider. Note that the max operator comes from the fact that the adversary has to make a choice between attacking the zone or just take advantage of the error due to randomization.
}
    &\scoreregMass(m^{\boldsymbol{q}}_{\boldsymbol{h}},\boldsymbol{\phi}')_{\mid U,\ U^+ ,\ U^-} \\
    =~ &
    \max \left( 1-\alpha, 1-\lambda \right) \times \nu_1 \mu_1\left(U\right) + \max \left( \alpha, 1-\lambda \right) \times \nu_{\text{-}1} \mu_{\text{-}1}(U)
    \\[0.2cm]
    +~&\max \left( 0, 1-\alpha-\lambda \right) \times \nu_1
     \mu_1\left(U^+ \right) + \nu_{\text{-}1} \mu_{\text{-}1} \left( U^+ \right) \\[0.2cm]
    +~&\nu_1\mu_1 \left( U^- \right) + \max \left(0, \alpha - \lambda \right)  \times \nu_{\text{-}1} \mu_{\text{-}1}\left(U^-\right).
\intertext{
Computing the difference between these two terms, we get the following
}
&  \scoreregMass(h_1,\boldsymbol{\phi}) - \scoreregMass(m^{\boldsymbol{q}}_{\boldsymbol{h}},\boldsymbol{\phi}') \numberthis \label{eq::difference1bis} \\
    =~& \left( 1- \lambda - \max \left( 1-\alpha, 1 -\lambda \right)  \right) \times \nu_1 \mu_1 \left( U \right) \numberthis \label{eq::difference2bis} 
    \\[0.2cm] 
    +~&  \left( 1 - \max \left( \alpha, 1-\lambda \right) \right) \times \nu_{\text{-}1} \mu_{\text{-}1} \left(U \right)  \numberthis \label{eq::difference3bis}
    \\[0.2cm]
    -~& \max \left(0, 1-\alpha- \lambda\right) \times \nu_1 \mu_1\left(U^+ \right)  \numberthis \label{eq::difference4bis}
     \\[0.2cm] + &  \left( 1 - \lambda - \max \left(0, \alpha - \lambda \right)\right) \times \nu_{\text{-}1} \mu_{\text{-}1} \left(U^-\right) \numberthis \label{eq::difference5bis}
\end{align*}
    
Let us now simplify Equation~(\ref{eq::difference1bis}) using additional assumptions.
\begin{itemize}
     \item First, we have that Equation~\eqref{eq::difference3bis} is equal to $$\min \left( 1 - \alpha, \lambda \right) \mu_{\text{-} 1}(U) \nu_{\text{-}1} > 0. $$
     Thus, a sufficient condition for the difference between the adversarial scores to be positive is to have the other terms greater or equal to $0$.
    \item To have Equation~\eqref{eq::difference2bis} $\geq 0$ we can always set $\max \left( 1-\alpha, 1-\lambda \right) = 1-\lambda.$
    This gives us $\alpha \geq  \lambda$.
    \item Also note that to get \eqref{eq::difference4bis} $\geq 0$, we can force $\max \left( 1-\alpha- \lambda, 0 \right)= 0.$ This gives us $\alpha \geq 1- \lambda$.
 \item Finally, since $\alpha \geq \lambda$, we have that
$1 - \lambda - \max \left(0, \alpha - \lambda \right) = 1-\alpha$ thus Equations~(\ref{eq::difference5bis}) $> 0$.
\end{itemize}
With the above simplifications, we have $\eqref{eq::difference1bis} > 0$ for any $\alpha > \max(\lambda, 1- \lambda)$ which concludes the proof.
\end{proof}

\begin{theorem}(Randomization matters)
\label{theorem:MixedRegimebis}
Let us consider $h_1 \in \mathcal{H}$, $\lambda \in (0,1)$, $\pen = \penCW$, $\boldsymbol{\phi} \in \BRD_{\pen}(h_1)$ and $h_2 \in \BRD(\boldsymbol{\phi})$. Let us take $\delta \in (0,\epsilon_2)$, then for any $\alpha \in (\max( 1- \lambda \delta , \lambda (\epsilon_2 - \delta)),1)$ and for any $\boldsymbol{\phi}' \in \BRD_{\pen}(m^{\mathbf{q}}_{\mathbf{h}})$ one has
\begin{equation*}
     \scoreregCW(m^{\mathbf{q}}_{\mathbf{h}}, \boldsymbol{\phi}') < \scoreregCW(h_1, \boldsymbol{\phi}).
\end{equation*}
Where $\mathbf{h}=(h_1,h_2)$, $\mathbf{q}=(\alpha, 1-\alpha)$, and $m^{\mathbf{q}}_{\mathbf{h}}$ is the mixture of $\mathbf{h}$ by $\mathbf{q}$.
\end{theorem}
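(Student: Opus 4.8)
The plan is to transpose the proof of Theorem~\ref{theorem:MixedRegime} to the norm penalty, the only substantive change being that the flat cost $\lambda$ of the mass penalty is replaced by the distance-dependent cost $\lambda\norm{x-\pi(x)}$ prescribed by Lemma~\ref{lemma:BRattCW}, with the free parameter $\delta$ used to keep these distances under control. Concretely, I would take $h_2$ to be the classifier obtained from $h_1$ by flipping its decision on the band of points lying within distance $\epsilon_2-\delta$ of the boundary, i.e. $h_2=-h_1$ on $P_{h_1}(\epsilon_2-\delta)\cup N_{h_1}(\epsilon_2-\delta)$ and $h_2=h_1$ elsewhere; this realises a best response to $\boldsymbol{\phi}$ in the (approximate) sense discussed in the Remark following Lemma~\ref{lemma:BRattCW}, the margin $\delta$ being exactly the ``small ball around the boundary'' left unflipped. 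I would then form $m^{\mathbf{q}}_{\mathbf{h}}$ with $\mathbf{q}=(\alpha,1-\alpha)$ and fix an arbitrary $\boldsymbol{\phi}'\in\BRD_{\pen}(m^{\mathbf{q}}_{\mathbf{h}})$.

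As in the mass-penalty argument, the two risks $\scoreregCW(h_1,\boldsymbol{\phi})$ and $\scoreregCW(m^{\mathbf{q}}_{\mathbf{h}},\boldsymbol{\phi}')$ agree outside three zones: the flipped band itself, the set of points that become newly attackable once $h_2$ misclassifies part of $P_{h_1}$ (a dilation of the band), and the preimages of the band under $\boldsymbol{\phi}$, so I would restrict attention to these. On the band, Lemma~\ref{lemma:BRattCW} gives the $h_1$ contribution directly as $1-\lambda\norm{x-\pi(x)}$ on the attacked class-$1$ mass and as $1$ on the already-misclassified class-$\text{-}1$ mass. The crux is the mixture contribution: for each point the Adversary now maximises over a continuum of displacements, choosing between paying the full projection cost to reach a region where \emph{both} $h_1$ and $h_2$ err, and merely cashing in the disagreement between $h_1$ and $h_2$ with little or no displacement. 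Each per-point value is therefore a maximum of the form $\max(1-\alpha,\,1-\lambda\norm{x-\pi(x)})$ for class-$1$ points and $\max(\alpha,\,1-\lambda\,\mathrm{dist})$ for class-$\text{-}1$ points, the exact analogues of the $\max$-terms appearing in~\eqref{eq::difference2bis}--\eqref{eq::difference5bis}.

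Subtracting the two restricted risks term by term, I would resolve every maximum by splitting each zone according to the distance to the boundary, which is precisely what $\delta$ buys. On the flipped band every attacked point is within distance $\epsilon_2-\delta$, so the projection cost is at most $\lambda(\epsilon_2-\delta)$; the hypothesis $\alpha>\lambda(\epsilon_2-\delta)$ then guarantees $1-\lambda\norm{x-\pi(x)}\ge 1-\alpha$ there, i.e. the full attack already dominates, so flipping creates no extra class-$1$ loss. Conversely, any freshly attackable point must travel strictly more than $\delta$ to reach the band, so its marginal cost exceeds $\lambda\delta$; the hypothesis $\alpha>1-\lambda\delta$, i.e. $1-\alpha<\lambda\delta$, then makes such an attack unprofitable and kills the dilation term. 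Under $\alpha>\max(1-\lambda\delta,\lambda(\epsilon_2-\delta))$ all these terms are nonnegative, while the genuinely positive contribution survives on the class-$\text{-}1$ mass of the band exactly as in the mass-penalty case: there $h_1$ already errs with value $1$ whereas the mixture caps the Adversary at $\max(\alpha,1-\lambda\,\mathrm{dist})<1$, leaving a strictly positive surplus $\min(1-\alpha,\lambda\,\mathrm{dist})$ integrated against $\mu_{\text{-}1}$, which is positive because $\mu_{\text{-}1}$ has full support and $\alpha<1$. This yields $\scoreregCW(h_1,\boldsymbol{\phi})-\scoreregCW(m^{\mathbf{q}}_{\mathbf{h}},\boldsymbol{\phi}')>0$.

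The main obstacle I anticipate is the bookkeeping of these distance-dependent maxima rather than any single hard inequality: unlike the mass penalty, where every attacked point contributes the same constant, the Adversary's payoff here varies continuously with displacement, so I must verify that the worst-case travel on each zone really is bounded by the $\delta$ and $\epsilon_2-\delta$ thresholds claimed above, that the projection $\pi$ onto $(P_{h_1})^{\complement}$ stays well defined (by the compactness argument of Lemma~\ref{lemma:BRattCW}) after the band has been flipped, and that the flipped band is indeed a legitimate best response to $\boldsymbol{\phi}$ in the approximate sense. Once these geometric bounds are in place, the sign analysis is identical in spirit to~\eqref{eq::difference1bis}, and choosing any $\delta\in(0,\epsilon_2)$ together with $\alpha>\max(1-\lambda\delta,\lambda(\epsilon_2-\delta))$ closes the argument.
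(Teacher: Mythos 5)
Your proposal follows essentially the same route as the paper's proof: the same construction of $h_2$ by flipping $h_1$ on a band kept at margin $\delta$ inside the attackable zone, the same restriction to the three zones (flipped band, its dilation, preimage of the band under $\boldsymbol{\phi}$), the same resolution of the pointwise maxima via the two thresholds $\alpha > \lambda(\epsilon_2-\delta)$ and $\alpha > 1-\lambda\delta$, and the same strictly positive surplus $\min\left(1-\alpha,\lambda\,\mathrm{dist}\right)$ on the class-$\text{-}1$ mass of the flipped region, justified by full support. The only cosmetic deviations are that the paper flips a subset $U$ of the positive band only (your two-sided flip adds symmetric bookkeeping on $N_{h_1}(\epsilon_2-\delta)$ and the intermediate ring $P_{h_1}(\epsilon_2)\setminus P_{h_1}(\epsilon_2-\delta)$, which still goes through under the same conditions) and that the paper explicitly checks nonnegativity of the preimage-zone term via $\Vert x-\proj_{N_{h_1}^{\complement}\setminus U}(x)\Vert \geq \Vert x-\proj_{U}(x)\Vert$, a step you subsume under ``all these terms are nonnegative.''
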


\begin{figure*}[!ht]
    \centering
    \includegraphics[width=0.5\textwidth]{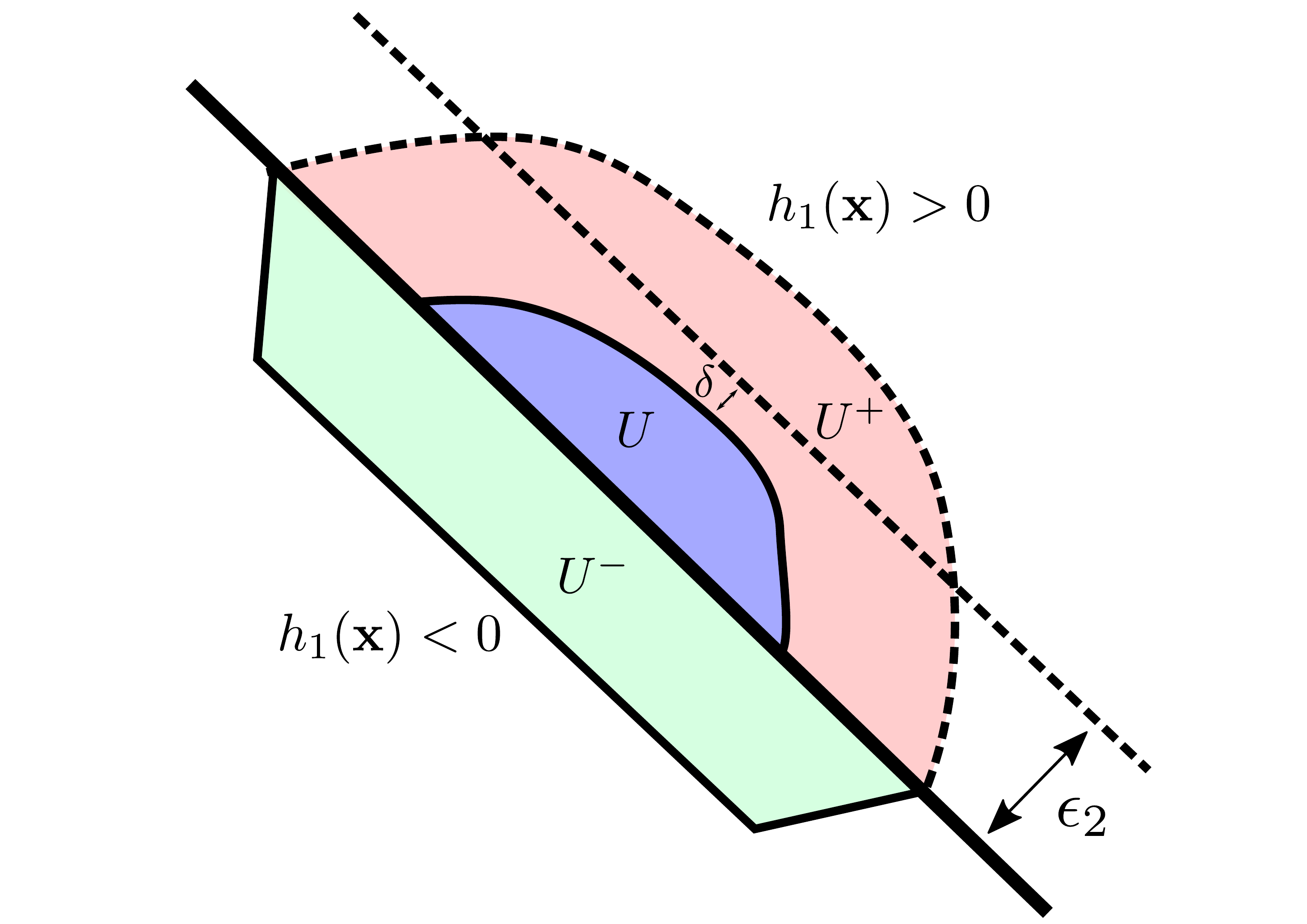}
    \caption{Illustration of the notations $U$, $U^+$, $U^-$ and $\delta$ for proof of Theorem~\ref{theorem:MixedRegimebis}.}
    \label{fig:preuve2}
\end{figure*}

\begin{proof}
Let us take $U \subset P_{h_1}(\epsilon_2)$ such that $$\min\limits_{x \in U}\Vert x - \proj_{P_h \setminus P_h(\epsilon_2)}( x) \Vert = \delta \in (0, \epsilon_2) $$. We construct $h_2$ as follows.
\begin{align*}
    h_2 (x) = \left\{
    \begin{array}{ll}
    -h_1(x) &\text{ if } x \in U\\
    h_1(x) &\text{ otherwise}.
    \end{array}
    \right.
\end{align*}
This means that $h_2$ changes the class of all points in $U$, and do not change the rest. Let $\alpha \in \left(0,1 \right)$, the corresponding mixture $m^{\boldsymbol{q}}_{\boldsymbol{h}}$, and $\boldsymbol{\phi}' \in \BRA(m^{\boldsymbol{q}}_{\boldsymbol{h}})$. We will find a condition on $\alpha$ so that the score of $m^{\boldsymbol{q}}_{\boldsymbol{h}}$ is lower than the score of $h_1$. Recall that
\begin{align*}
    &\scoreregCW(m^{\boldsymbol{q}}_{\boldsymbol{h}},\boldsymbol{\phi}') \\ =~& \nu_1  \int\limits_{\mathcal{X}} \essup\limits_{z \in B_{\norm{.}}(x,\epsilon_2)} \alpha \mathds{1} \left\{h_1(z) = \text{-}1  \right \} + (1- \alpha) \mathds{1}\left \{ h_2(z) = \text{-}1  \right\} - \lambda \norm{x - z} ~d\mu_1(x) \\
    +~& \nu_{\text{-}1}  \int\limits_{\mathcal{X}} \essup\limits_{z \in B_{\norm{.}}(x,\epsilon_2)} \alpha \mathds{1} \left\{h_1(z) = 1\right \} + (1- \alpha) \mathds{1}\left \{ h_2(z) = 1  \right\} - \lambda \norm{x - z}~d\mu_{\text{-}1}(x) .
\end{align*}
As we discussed in proof of Theorem~\ref{theorem:MixedRegime}, the only terms that may vary between the score of $h_1$ and the score of $m^{\boldsymbol{q}}_{\boldsymbol{h}}$ are the integrals on $U$, $U \oplus \epsilon_2 \cap P_{h_1}$ and $\phi_{\text{-}1}^{-1}(U)$. Hence, for simplicity, we only write those terms. Furthermore, we denote $$ U^{+}  := U \oplus \epsilon_2 \cap P_{h_1} \setminus U , \  U^{-}  := \phi_{\text{-}1}^{-1}(U) \text{ and }
P_{\epsilon_2}  := P_{h_1}(\epsilon_2). $$ 
One can refer to Figure~\ref{fig:preuve2} for a visual interpretation of this ensembles.
We can now evaluate the worst case adversarial score for $h_1$ restricted to the above sets. Thanks to Lemma~\ref{lemma:BRattCW} that characterizes $\boldsymbol{\phi}$, we can write
\begin{align*}
& \scoreregCW(h_1,\boldsymbol{\phi})\\ =~&
 \nu_1 \int\limits_{U} \left( 1 - \lambda \Vert x - \proj_{P_{h_1}^{\complement}}(x) \Vert \right) d\mu_1(x)
+ \nu_{\text{-}1} \mu_{\text{-}1}(U)
\\ +~& \nu_1 \int\limits_{ U^+ \setminus P_{\epsilon_2}} 0 \text{ }d\mu_1(x) + \nu_{\text{-}1} \mu_{\text{-}1}\left(U^+ \setminus P_{\epsilon_2} \right)
\\ +~&  \nu_1 \int\limits_{U^+ \cap P_{\epsilon_2}} \left( 1 - \lambda \Vert x - \proj_{P_{h_1}^{\complement}}(x) \Vert \right)d\mu_1(x) +  \nu_{\text{-}1} \mu_{\text{-}1}\left(U^+ \cap P_{\epsilon_2} \right) 
\\ +~& \nu_1 \mu_1\left(U^-\right) + \nu_{\text{-}1} \int\limits_{U^-}\Big( 1 - \lambda \Vert x - \proj_U(x) \Vert \Big) d\mu_{\text{-}1}(x).
\intertext{
Similarly we can evaluate the worst case adversarial score for the mixture,}
   &\scoreregCW(m^{\boldsymbol{q}}_{\boldsymbol{h}},\boldsymbol{\phi}') \\
   =~&
    \nu_1 \int\limits_U \max \left( 1-\alpha, 1-\lambda \Vert x - \proj_{P_{h_1}^{\complement}}(x) \Vert \right)~~d\mu_1(x)
    \\ +~& \nu_{\text{-}1} \int\limits_{U} \max \left( \alpha, 1-\lambda \Vert x - \proj_{U^+}(x)\Vert \right)~~d\mu_{\text{-}1}(x)
    \\ +~& \nu_1 \int\limits_{U^+ \setminus P_{\epsilon_2} }
    \max \left( 0, 1-\alpha-\lambda \Vert x - \proj_U(x) \Vert \right) ~~d\mu_1(x) + \nu_{\text{-}1} \mu_{\text{-}1} \left( U^+ \setminus P_{\epsilon_2} \right)
    \\ +~& \nu_1 \int\limits_{ U^+ \cap P_{\epsilon_2}} \max \left( 1-\alpha- \lambda \Vert x - \proj_U(x) \Vert, 1 - \lambda \Vert x - \proj_{P_{h_1}^{\complement}}(x) \Vert \right)~~d\mu_1(x)
    \\ +~& \nu_{\text{-}1} \mu_{\text{-}1} \left(U^+ \cap P_{\epsilon_2}\right) + \nu_1 \mu_1\left(U^-\right) 
    \\[0.2cm] +~& \nu_{\text{-}1}\int\limits_{U^-} \max \left(0, 1 - \lambda \Vert x- \proj_{N_{h_1}^{\complement} \setminus U }(x)\Vert, \alpha - \lambda \Vert x-\proj_U(x) \Vert \right) d\mu_{\text{-}1}(x).
\end{align*}

Note that we need to take into account the special case of the points in the dilation that were already in the attacked zone before, and that can now be attacked in two ways, either by projecting on $U$ -- but that works with probability $\alpha$, since the classification on $U$ is now randomized -- or by projecting on $P_{h_1}^{\complement}$, which works with probability 1 but may use more distance and so pay more penalty. We can now compute the difference between both scores.
\begin{align*}
     &  \scoreregCW(h_1,\boldsymbol{\phi}) - \scoreregCW(m^{\boldsymbol{q}}_{\boldsymbol{h}},\boldsymbol{\phi}') \numberthis \label{eq::difference1}\\
   =~&\nu_1 \int\limits_U  1-\lambda \Vert x - \proj_{P_{h_1}^{\complement}}(x) \Vert - \max \left( 1-\alpha, 1-\lambda \Vert x - \proj_{P_{h_1}^{\complement}}(x) \Vert \right)  d\mu_1(x) \numberthis \label{eq::difference2}
    \\ +~& \nu_{\text{-}1} \int\limits_{U} 1 - \max \left( \alpha, 1-\lambda \Vert x - \proj_{U^+}(x)\Vert \right) d\mu_{\text{-}1}(x) \numberthis \label{eq::difference3}
    \\-~& \nu_1 \int\limits_{U^+ \setminus P_{\epsilon_2}} \max \left( 1-\alpha- \lambda \Vert x - \proj_U(x) \Vert, 0 \right) d\mu_1(x) \numberthis \label{eq::difference6} 
    \\
    +~& \nu_1 \int\limits_{U^+ \cap P_{\epsilon_2}} 1-\lambda \Vert x - \proj_{P_{h_1}^{\complement}}(x) \Vert \\ 
    -~&  \max \left( 1-\alpha- \lambda \Vert x - \proj_U(x) \Vert, 1 - \lambda \Vert x - \proj_{P_{h_1}^{\complement}}(x) \Vert \right)  d\mu_1(x) \numberthis\label{eq::difference4two}
    \\ +~& \nu_{\text{-}1} \int\limits_{U^-} 1 - \lambda \Vert x - \proj_{U}(x) \Vert  \\
    -~&  \max \left(0, 1 - \lambda \Vert x- \proj_{N_{h_1}^{\complement} \setminus U}(x)\Vert, \alpha - \lambda \Vert x-\proj_U(x) \Vert \right) d\mu_{\text{-}1}(x). \numberthis \label{eq::difference5two}
    \end{align*}
Let us simplify Equation~(\ref{eq::difference1}) using using additional hypothesis:
\begin{itemize}
    \item First, note that Equation~\eqref{eq::difference3}> 0. Then a sufficient condition for the difference to be strictly positive is to ensure that other lines are $\geq 0$. 
    \item In particular to have $\eqref{eq::difference2} \geq 0$ it is sufficient to have for all $x \in U$ $$ \max \left( 1-\alpha, 1-\lambda \Vert x - \proj_{P_{h_1}^{\complement}}(x) \Vert \right) = 1-\lambda \Vert x - \proj_{P_{h_1}^{\complement}}(x) \Vert.$$
    
    This gives us $\alpha \geq \lambda (\epsilon_2-\delta) \geq  \lambda \max\limits_{x \in U}\Vert x - \proj_{P_{h_1}^{\complement}}(x) \Vert.$
    \item Similarly, to have $\eqref{eq::difference6} \geq 0$, we should set for all $x \in U^+ \setminus P_{\epsilon_2}$ $$ \alpha \geq 1 - \lambda \Vert x - \proj_U(x) \Vert.$$
    Since $\min\limits_{x \in U^+ \setminus P_{\epsilon_2}}\Vert x - \proj_U(x) \Vert =\delta$, we get the condition $\alpha \geq 1 -\lambda \delta$.
    \item Finally \eqref{eq::difference5two} $\geq 0$, since by definition of $U^-$, for any $x \in U^-$ we have $$\Vert x -\proj_{N_{h_1}^{\complement}\setminus U}(x) \Vert \geq \Vert x - \proj_{U}(x) \Vert.$$ 
\end{itemize}
Finally, by summing all these simplifications, we have $\eqref{eq::difference1} > 0$. Hence the result hold for any $\alpha > \max(1 - \lambda \delta, \lambda (\epsilon_2 - \delta) )$
\end{proof}
\section{Experimental results}

In the experimental section, we consider $\mathcal{X}=[0,1]^{3\times 32 \times 32}$ to be the set of images, and $\mathcal{Y}=\{1,...,10\}$ or $\mathcal{Y}=\{1,...,100\}$ according to the dataset at hand. 

\subsection{Adversarial attacks} 

Let $(x,y) \sim D$ and $h \in \mathcal{H}$. We consider the following attacks:

\textbf{(i) $\ell_\infty$\textbf{-PGD} attack.} In this scenario, the Adversary maximizes the loss objective function, under the constraint that the $\ell_\infty$ norm of the perturbation remains bounded by some value $\epsilon_\infty$. To do so, it recursively computes:
\begin{equation}
    x^{t+1}=\Pi_{B_{\norm{.}}(x,\epsilon_\infty)}\left[x^t
    +\beta \sign \left(\nabla_x\mathcal{L}\left(h\left(x^t\right),y\right)\right)\right]
    \label{eqn::projectionPGD}
\end{equation}
where $\mathcal{L}$ is some differentiable loss (such as the cross-entropy), $\beta$ is a gradient step size, and $\Pi_S$ is the projection operator on $S$. One can refer to~\cite{madry2018towards} for implementation details.

\textbf{(ii) $\ell_{2}$\textbf{-C\&W} attack.} In this attack, the Adversary optimizes the following objective:
  \begin{equation}
    \argmin_{\tau \in \mathcal{X}} \norm{\tau}_2+ \lambda \times \text{cost}(x+\tau)
    \label{eqn::CWproblem}
\end{equation}
where $\text{cost}(x+\tau)<0$ if and only if $h(x+\tau) \neq y$. 
The authors use a change of variable $\tau=\frac{1}{2}(\tanh(w)-x +1)$ to ensure that $x+\tau \in \mathcal{X}$, a binary search to optimize the constant $\lambda$, and Adam or SGD to compute an approximated solution. One should refer to~\cite{carlini2017towards} for implementation details.

\subsection{Experimental setup}

\label{section:exp_setup}

\paragraph{Datasets.}To illustrate our theoretical results we did experiments on the \textbf{CIFAR10} and \textbf{CIFAR100} datasets. See~\cite{cifar10} for more details. 

\paragraph{Classifiers.} All the classifiers we use are WideResNets (see~\cite{ZagoruykoK16}) with 28 layers, a widen factor of 10, a dropout factor of 0.3 and LeakyRelu activations with a 0.1 slope.

\paragraph{Natural Training.} To train an undefended classifier we use the following hyperparameters. 
        \begin{itemize}
            \item \textbf{Number of Epochs:} 200
            \item \textbf{Batch size:} 128
            \item \textbf{Loss function:} Cross Entropy Loss
            \item \textbf{Optimizer : } SGD algorithm with momentum 0.9, weight decay of $2\times10^{-4}$ and a learning rate that decreases during the training as follows: 
            \begin{align}
                lr = \left\{
                        \begin{matrix}
                        &0.1 & \text{if} & 0 &\leq & \text{epoch} & <& 60\\
                        &0.02 & \text{if} & 60 &\leq & \text{epoch} & <& 120\\
                        &0.004 & \text{if} & 120 &\leq & \text{epoch} & <& 160\\
                        &0.0008 & \text{if} & 160 &\leq & \text{epoch} & <& 200\\
                        \end{matrix} \notag
                        \right.
            \end{align}
        \end{itemize}
\paragraph{Adversarial Training.}To adversarially train a classifier we use the same hyperparameters as above, and generate adversarial examples using the $\ell_\infty$\textbf{-PGD} attack with 20 iterations. When considering that the input space is $[0,255]^{3 \times 32 \times 32}$, on \textbf{CIFAR10} and \textbf{CIFAR100}, a perturbation is considered to be imperceptible for $\epsilon_\infty=8$. Here, we consider $\mathcal{X}=[0,1]^{3 \times 32 \times 32}$ which is the normalization of the pixel space $[0.255]^{3 \times 32 \times 32}$. Hence, we choose  $\epsilon_2=0.031$ ($\approx 8/255$) for each attack. Moreover, the step size we use for $\ell_\infty$\textbf{-PGD} is $0.008$ ($\approx 2/255$), we use a random initialization for the gradient descent and we repeat the procedure three times to take the best perturbation over all the iterations \emph{i.e} the one that maximises the loss.
For the $\ell_\infty$\textbf{-PGD} attack against the mixture $m^{\mathbf{q}}_{\mathbf{h}}$, we use the same parameters as above, but compute the gradient over the loss of the expected logits (as explained in the main paper).

\paragraph{Evaluation Under Attack.}At evaluation time, we use 100 iterations instead of 20 for \textbf{Adaptive-}$\ell_\infty$\textbf{-PGD}, and the same remaining hyperparameters as before. For the \textbf{Adaptive-}$\ell_{2}$\textbf{-C\&W} attack, we use 100 iterations, a learning rate equal to 0.01, 9 binary search steps, and an initial constant of 0.001. We give results for several different values of the rejection threshold: $\epsilon_2 \in \{0.4,0.6,0.8\}$. 

\paragraph{Computing \textbf{Adaptive-}$\ell_2\textbf{-C\&W}$ on a mixture} To attack a randomized model, it is advised in the literature \cite{tramer2020adaptive} to compute the expected logits returned by this model. However this advice holds for randomized models that return logits in the same range for a same example (\emph{e.g.} classifier with noise injection). Our randomized model is a mixture and returns logits that depend on selected classifier. Hence, for a same example, the logits can be very different. This phenomenon made us notice that for some example in the dataset, computing the expected loss over the classifier (instead of the expected logits) performs better to find a good perturbation (it can be seen as computing the expectation of the logits normalized thanks to the loss). To ensure a fair evaluation of our model, in addition of using EOT with the expected logits, we compute in parallel EOT with the expected loss and take the perturbation that maximizes the expected error of the mixture. See the submitted code for more details.

\paragraph{Library used.} We used the Pytorch and Advertorch libraries for all implementations.

\paragraph{Machine used.} 6 Tesla V100-SXM2-32GB GPUs

\subsection{Experimental details}

\paragraph{Sanity checks for Adaptive attacks}

In \cite{tramer2020adaptive}, the authors give a lot of sanity checks and good practices to design an Adaptive attacks. We follow them and here are the information for \textbf{Adaptive-}$\ell_{\infty}$\textbf{-PGD} : 

\begin{itemize}
    \item We compute the gradient of the loss by doing the expected logits over the mixture.
    \item The attack is repeated 3 times with random start and we take the best perturbation over all the iterations.
    \item When adding a constant to the logits, it doesn't change anything to the attack 
    \item When doing 200 iterations instead of 100 iterations, it doesn't change the performance of the attack
    \item When increasing the budget $\epsilon_\infty$, the accuracy goes to 0, which ensures that there is no gradient masking. Here are some values to back this statement: 
    \begin{table}[H]
        \centering
        \begin{tabular}{c|c|c|c|c}
                 Epsilon & 0.015 & 0.031 & 0.125 & 0.250 \\
    Accuracy & 0.638 & 0.546 & 0.027 & 0.000
        \end{tabular}
        \caption{Evolution of the accuracy under \textbf{Adaptive-}$\ell_{\infty}$\textbf{-PGD} attack depending on the budget $\epsilon_\infty$}
        \label{tab:my_label}
    \end{table}
    \item The loss doesn't fluctuate at the end of the optimization process.
    
\end{itemize}

\paragraph{Selecting the first element of the mixture.}Our algorithm creates classifiers in a boosting fashion, starting with an adversarially trained classifier. There are several ways of selecting this first element of the mixture: use the classifier with the best accuracy under attack (option 1, called bestAUA), or rather the one with the best natural accuracy (option 2). Table~\ref{tab_supp_cifar10} compares both options. 

Beside the fact that any of the two mixtures outperforms the first classifier, we see that the fisrt option always outperforms the second. In fact, when taking option 1 (bestAUA = True) the accuracy under $\ell_\infty$\textbf{-PGD} attack of the mixture is $3\%$ better than with option 2 (bestAUA = False). One can also note that both mixtures have the same natural accuracy ($0.80$), which makes the choice of option 1 natural.

\begin{table}[H]
\begin{center}
\begin{tabular}{ c c c c c} 
\toprule
 \textbf{Training method} & NA of the $1^{st}$ clf & AUA of the $1^{st}$ clf & NA of the mixture & AUA of the mixture \\ 
 \toprule
 BAT (bestAUA=True) & 0.77 & \textbf{0.46} &  \textbf{0.80} & \textbf{0.55} \\  
 BAT (bestAUA=False) & \textbf{0.83} & 0.42 & \textbf{0.80} & 0.52 \\ 
\bottomrule
\end{tabular}
\end{center}
\caption{Comparison of the mixture that has as first classifier the best one in term of natural accuracy and the mixture that has as first classifier the best one in term of Accuracy under attack. The accuracy under attack is computed with the $\ell_\infty$\textbf{-PGD} attack. NA means matural accuracy, and AUA means accuracy under attack.}
\label{tab_supp_cifar10}
\end{table}

\subsection{Extension to more than two classifiers}

As we mention in the main part of the paper, a mixture of more than two classifiers can be constructed by adding at each step $t$ a new classifier trained naturally on the dataset $\tilde{D}$ that contains adversarial examples against the mixture at step $t-1$. Since  $\tilde{D}$ has to be constructed from a mixture, one would have to use an adaptive attack as \textbf{Adaptive-}$\ell_{\infty}$\textbf{-PGD}. Here is the algorithm for the extented version : 

\begin{algorithm}[H]
\SetAlgoLined
 \textbf{Input} : $n$ the number of classifiers, $D$ the training data set and $\alpha$ the weight update parameter. \\
 
 Create and adversarially train $h_1$ on $D$ \\
 $\mathbf{h} = (h_1)$ ; $\mathbf{q} = (1)$ \\
\For{$i = 2, \dots, n$}{
    Generate the adversarial data set $\tilde{D}$ against $m^{\mathbf{q}}_{\mathbf{h}}$. \\
    Create and naturally train $h_i$ on $\tilde{D}$ \\
 
    $q_k \leftarrow (1 - \alpha)q_k \ \ \ \  \forall k \in  [ i-1 ]$ \\
    $q_i \leftarrow \alpha$\\

    ${\bf q} \leftarrow \left(\alpha, \ldots, q_i\right)$ \\
    ${\bf h} \leftarrow (h_1,\ldots, h_i)$

}
return $m^{\mathbf{q}}_{\mathbf{h}}$ \\

    \caption{Boosted Adversarial Training}
 \label{algorithm:Boosting}
\end{algorithm}

Here to find the parameter $\alpha$, the grid search is more costly. In fact in the two-classifier version we only need to train the first and second classifier without taking care of $\alpha$, and then test all the values of $\alpha$ using the same two classifier we trained. For the extended version, the third classifier (and all the other ones added after) depends on the first classifier, the second one and their weights $1 - \alpha$ and $\alpha$. Hence the third classifier for a certain value of $\alpha$ can't be use for another one and, to conduct the grid search, one have to retrain all the classifiers from the third one.
Naturally the parameters $\alpha$ depends on the number of classifiers $n$ in the mixtures.

\end{document}


}%
           \typeout{*******************************************************}%
 	    \typeout{}%
           \typeout{}%

	   \chead{\small\bf Title Suppressed Due to Excessive Size}%
    \else

  	   \chead{\small\bf\@icmltitlerunning}%
    \fi

  \thispagestyle{empty}

  {\center\baselineskip 18pt
                       \toptitlebar{\Large\bf #1}\bottomtitlebar}
}

\gdef\icmlfullauthorlist{}
\newcommand\addstringtofullauthorlist{\g@addto@macro\icmlfullauthorlist}
\newcommand\addtofullauthorlist[1]{%
  \ifdefined\icmlanyauthors%
    \addstringtofullauthorlist{, #1}%
  \else%
    \addstringtofullauthorlist{#1}%
    \gdef\icmlanyauthors{1}%
  \fi%
  \ifdefined\nohyperref\else\ifdefined\hypersetup%
    \hypersetup{pdfauthor=\icmlfullauthorlist}%
  \fi\fi}

\def\toptitlebar{\hrule height1pt \vskip .25in} 
\def\bottomtitlebar{\vskip .22in \hrule height1pt \vskip .3in} 

\newenvironment{icmlauthorlist}{%
  \setlength\topsep{0pt}
  \setlength\parskip{0pt}
  \begin{center}
}{%
  \end{center}
}

\newcounter{@affiliationcounter}
\newcommand{\@pa}[1]{%
\ifcsname the@affil#1\endcsname
   %
\else
  \ifcsname @icmlsymbol#1\endcsname
    %
    
  \else
  \stepcounter{@affiliationcounter}%
  \newcounter{@affil#1}%
  \setcounter{@affil#1}{\value{@affiliationcounter}}%
  \fi
\fi%
\ifcsname @icmlsymbol#1\endcsname
  \textsuperscript{\csname @icmlsymbol#1\endcsname\,}%
\else

  \textsuperscript{\arabic{@affil#1}\,}%
\fi
}

\newcommand{\icmlauthor}[2]{%
  \ifdefined\isaccepted
    \mbox{\bf #1}\,\@for\theaffil:=#2\do{\@pa{\theaffil}} \addtofullauthorlist{#1}%
   \else
    \ifdefined\@icmlfirsttime
    \else
      \gdef\@icmlfirsttime{1}
      \mbox{\bf Anonymous Authors}\@pa{@anon} \addtofullauthorlist{Anonymous Authors}
     \fi
    \fi
}

\newcommand{\icmlsetsymbol}[2]{%
  \expandafter\gdef\csname @icmlsymbol#1\endcsname{#2}
 }

\newcommand{\icmlaffiliation}[2]{%
\ifdefined\isaccepted
\ifcsname the@affil#1\endcsname
 \expandafter\gdef\csname @affilname\csname the@affil#1\endcsname\endcsname{#2}%
\else
  {\bf AUTHORERR: Error in use of \textbackslash{}icmlaffiliation command. Label ``#1'' not mentioned in some \textbackslash{}icmlauthor\{author name\}\{labels here\} command beforehand. }
  \typeout{}%
  \typeout{}%
  \typeout{*******************************************************}%
  \typeout{Affiliation label undefined. }%
  \typeout{Make sure \string\icmlaffiliation\space follows }
  \typeout{all of \string\icmlauthor\space commands}%
  \typeout{*******************************************************}%
  \typeout{}%
  \typeout{}%
\fi
\else 
 \expandafter\gdef\csname @affilname1\endcsname{Anonymous Institution, Anonymous City, Anonymous Region, Anonymous Country}
\fi
}

\newcommand{\icmlcorrespondingauthor}[2]{
\ifdefined\isaccepted
 \ifdefined\icmlcorrespondingauthor@text
   \g@addto@macro\icmlcorrespondingauthor@text{, #1 \textless{}#2\textgreater{}}
 \else
   \gdef\icmlcorrespondingauthor@text{#1 \textless{}#2\textgreater{}}
 \fi
\else
\gdef\icmlcorrespondingauthor@text{Anonymous Author \textless{}anon.email@domain.com\textgreater{}}
\fi
}

\newcommand{\icmlEqualContribution}{\textsuperscript{*}Equal contribution }

\newcounter{@affilnum}
\newcommand{\printAffiliationsAndNotice}[1]{%
\stepcounter{@affiliationcounter}%
{\let\thefootnote\relax\footnotetext{\hspace*{-\footnotesep}\ifdefined\isaccepted #1\fi%
\forloop{@affilnum}{1}{\value{@affilnum} < \value{@affiliationcounter}}{
\textsuperscript{\arabic{@affilnum}}\ifcsname @affilname\the@affilnum\endcsname%
\csname @affilname\the@affilnum\endcsname%
\else
{\bf AUTHORERR: Missing \textbackslash{}icmlaffiliation.}
\fi
}.
\ifdefined\icmlcorrespondingauthor@text
Correspondence to: \icmlcorrespondingauthor@text.
\else
{\bf AUTHORERR: Missing \textbackslash{}icmlcorrespondingauthor.}
\fi

\ \\
\Notice@String
}
}
}


\long\def\icmladdress#1{%
 {\bf The \textbackslash{}icmladdress command is no longer used.  See the example\_paper PDF .tex for usage of \textbackslash{}icmlauther and \textbackslash{}icmlaffiliation.}
}

\def\icmlkeywords#1{%

  \ifdefined\nohyperref\else\ifdefined\hypersetup
    \hypersetup{pdfkeywords={#1}}
  \fi\fi
}

\setcitestyle{authoryear,round,citesep={;},aysep={,},yysep={;}}

\renewenvironment{abstract}
   {%

\centerline{\large\bf Abstract}
    \vspace{-0.12in}\begin{quote}}
   {\par\end{quote}\vskip 0.12in}

\def\@startsection#1#2#3#4#5#6{\if@noskipsec \leavevmode \fi
   \par \@tempskipa #4\relax
   \@afterindenttrue

   \ifdim \@tempskipa <\z@ \@tempskipa -\@tempskipa \fi
   \if@nobreak \everypar{}\else
     \addpenalty{\@secpenalty}\addvspace{\@tempskipa}\fi \@ifstar
     {\@ssect{#3}{#4}{#5}{#6}}{\@dblarg{\@sict{#1}{#2}{#3}{#4}{#5}{#6}}}}

\def\@sict#1#2#3#4#5#6[#7]#8{\ifnum #2>\c@secnumdepth
     \def\@svsec{}\else 
     \refstepcounter{#1}\edef\@svsec{\csname the#1\endcsname}\fi
     \@tempskipa #5\relax
      \ifdim \@tempskipa>\z@
        \begingroup #6\relax
          \@hangfrom{\hskip #3\relax\@svsec.~}{\interlinepenalty \@M #8\par}
        \endgroup
       \csname #1mark\endcsname{#7}\addcontentsline
         {toc}{#1}{\ifnum #2>\c@secnumdepth \else
                      \protect\numberline{\csname the#1\endcsname}\fi
                    #7}\else
        \def\@svsechd{#6\hskip #3\@svsec #8\csname #1mark\endcsname
                      {#7}\addcontentsline
                           {toc}{#1}{\ifnum #2>\c@secnumdepth \else
                             \protect\numberline{\csname the#1\endcsname}\fi
                       #7}}\fi
     \@xsect{#5}}

\def\@sect#1#2#3#4#5#6[#7]#8{\ifnum #2>\c@secnumdepth
     \def\@svsec{}\else 
     \refstepcounter{#1}\edef\@svsec{\csname the#1\endcsname\hskip 0.4em }\fi
     \@tempskipa #5\relax
      \ifdim \@tempskipa>\z@ 
        \begingroup #6\relax
          \@hangfrom{\hskip #3\relax\@svsec}{\interlinepenalty \@M #8\par}
        \endgroup
       \csname #1mark\endcsname{#7}\addcontentsline
         {toc}{#1}{\ifnum #2>\c@secnumdepth \else
                      \protect\numberline{\csname the#1\endcsname}\fi
                    #7}\else
        \def\@svsechd{#6\hskip #3\@svsec #8\csname #1mark\endcsname
                      {#7}\addcontentsline
                           {toc}{#1}{\ifnum #2>\c@secnumdepth \else
                             \protect\numberline{\csname the#1\endcsname}\fi
                       #7}}\fi
     \@xsect{#5}}

\def\thesection {\arabic{section}}
\def\thesubsection {\thesection.\arabic{subsection}}
\def\section{\@startsection{section}{1}{\z@}{-0.12in}{0.02in}
             {\large\bf\raggedright}}
\def\subsection{\@startsection{subsection}{2}{\z@}{-0.10in}{0.01in}
                {\normalsize\bf\raggedright}}
\def\subsubsection{\@startsection{subsubsection}{3}{\z@}{-0.08in}{0.01in}
                {\normalsize\sc\raggedright}}
\def\paragraph{\@startsection{paragraph}{4}{\z@}{1.5ex plus
  0.5ex minus .2ex}{-1em}{\normalsize\bf}}
\def\subparagraph{\@startsection{subparagraph}{5}{\z@}{1.5ex plus
  0.5ex minus .2ex}{-1em}{\normalsize\bf}}
 
\footnotesep 6.65pt %
\skip\footins 9pt 
\def\footnoterule{\kern-3pt \hrule width 0.8in \kern 2.6pt } 
\setcounter{footnote}{0} 
 
\parindent 0pt 
\topsep 4pt plus 1pt minus 2pt 
\partopsep 1pt plus 0.5pt minus 0.5pt 
\itemsep 2pt plus 1pt minus 0.5pt 
\parsep 2pt plus 1pt minus 0.5pt 
\parskip 6pt
 
\leftmargin 2em \leftmargini\leftmargin \leftmarginii 2em 
\leftmarginiii 1.5em \leftmarginiv 1.0em \leftmarginv .5em  
\leftmarginvi .5em 
\labelwidth\leftmargini\advance\labelwidth-\labelsep \labelsep 5pt 
 
\def\@listi{\leftmargin\leftmargini} 
\def\@listii{\leftmargin\leftmarginii 
   \labelwidth\leftmarginii\advance\labelwidth-\labelsep 
   \topsep 2pt plus 1pt minus 0.5pt 
   \parsep 1pt plus 0.5pt minus 0.5pt 
   \itemsep \parsep} 
\def\@listiii{\leftmargin\leftmarginiii 
    \labelwidth\leftmarginiii\advance\labelwidth-\labelsep 
    \topsep 1pt plus 0.5pt minus 0.5pt  
    \parsep \z@ \partopsep 0.5pt plus 0pt minus 0.5pt 
    \itemsep \topsep} 
\def\@listiv{\leftmargin\leftmarginiv 
     \labelwidth\leftmarginiv\advance\labelwidth-\labelsep} 
\def\@listv{\leftmargin\leftmarginv 
     \labelwidth\leftmarginv\advance\labelwidth-\labelsep} 
\def\@listvi{\leftmargin\leftmarginvi 
     \labelwidth\leftmarginvi\advance\labelwidth-\labelsep} 
 
\abovedisplayskip 7pt plus2pt minus5pt%
\belowdisplayskip \abovedisplayskip 
\abovedisplayshortskip  0pt plus3pt%
\belowdisplayshortskip  4pt plus3pt minus3pt%
 
\def\@normalsize{\@setsize\normalsize{11pt}\xpt\@xpt} 
\def\small{\@setsize\small{10pt}\ixpt\@ixpt} 
\def\footnotesize{\@setsize\footnotesize{10pt}\ixpt\@ixpt} 
\def\scriptsize{\@setsize\scriptsize{8pt}\viipt\@viipt} 
\def\tiny{\@setsize\tiny{7pt}\vipt\@vipt} 
\def\large{\@setsize\large{14pt}\xiipt\@xiipt} 
\def\Large{\@setsize\Large{16pt}\xivpt\@xivpt} 
\def\LARGE{\@setsize\LARGE{20pt}\xviipt\@xviipt} 
\def\huge{\@setsize\huge{23pt}\xxpt\@xxpt} 
\def\Huge{\@setsize\Huge{28pt}\xxvpt\@xxvpt} 

\newsavebox\newcaptionbox\newdimen\newcaptionboxwid

\long\def\@makecaption#1#2{
 \vskip 10pt 
        \baselineskip 11pt
        \setbox\@tempboxa\hbox{#1. #2}
        \ifdim \wd\@tempboxa >\hsize
        \sbox{\newcaptionbox}{\small\sl #1.~}
        \newcaptionboxwid=\wd\newcaptionbox
        \usebox\newcaptionbox {\footnotesize #2}
        \else 
          \centerline{{\small\sl #1.} {\small #2}} 
        \fi}

\def\fnum@figure{Figure \thefigure}
\def\fnum@table{Table \thetable}

\def\abovestrut#1{\rule[0in]{0in}{#1}\ignorespaces}
\def\belowstrut#1{\rule[-#1]{0in}{#1}\ignorespaces}

\def\abovespace{\abovestrut{0.20in}}
\def\aroundspace{\abovestrut{0.20in}\belowstrut{0.10in}}
\def\belowspace{\belowstrut{0.10in}}

\def\texitem#1{\par\noindent\hangindent 12pt
               \hbox to 12pt {\hss #1 ~}\ignorespaces}
\def\icmlitem{\texitem{$\bullet$}}

\long\def\comment#1{}

\makeatletter
\newbox\icmlrulerbox
\newcount\icmlrulercount
\newdimen\icmlruleroffset
\newdimen\cv@lineheight
\newdimen\cv@boxheight
\newbox\cv@tmpbox
\newcount\cv@refno
\newcount\cv@tot

\newcount\cv@tmpc@ \newcount\cv@tmpc
\def\fillzeros[#1]#2{\cv@tmpc@=#2\relax\ifnum\cv@tmpc@<0\cv@tmpc@=-\cv@tmpc@\fi
\cv@tmpc=1 %
\loop\ifnum\cv@tmpc@<10 \else \divide\cv@tmpc@ by 10 \advance\cv@tmpc by 1 \fi
   \ifnum\cv@tmpc@=10\relax\cv@tmpc@=11\relax\fi \ifnum\cv@tmpc@>10 \repeat
\ifnum#2<0\advance\cv@tmpc1\relax-\fi
\loop\ifnum\cv@tmpc<#1\relax0\advance\cv@tmpc1\relax\fi \ifnum\cv@tmpc<#1 \repeat
\cv@tmpc@=#2\relax\ifnum\cv@tmpc@<0\cv@tmpc@=-\cv@tmpc@\fi \relax\the\cv@tmpc@}%

\def\makevruler[#1][#2][#3][#4][#5]{
	\begingroup\offinterlineskip
		\textheight=#5\vbadness=10000\vfuzz=120ex\overfullrule=0pt%
		\global\setbox\icmlrulerbox=\vbox to \textheight{%
			{
				\parskip=0pt\hfuzz=150em\cv@boxheight=\textheight
				\cv@lineheight=#1\global\icmlrulercount=#2%
				\cv@tot\cv@boxheight\divide\cv@tot\cv@lineheight\advance\cv@tot2%
				\cv@refno1\vskip-\cv@lineheight\vskip1ex%
				\loop\setbox\cv@tmpbox=\hbox to0cm{					 
					\hfil {\hfil\fillzeros[#4]\icmlrulercount}
				}%
				\ht\cv@tmpbox\cv@lineheight\dp\cv@tmpbox0pt\box\cv@tmpbox\break
				\advance\cv@refno1\global\advance\icmlrulercount#3\relax
				\ifnum\cv@refno<\cv@tot\repeat
			}
		}
	\endgroup
}%
\makeatother

\def\icmlruler#1{\makevruler[12pt][#1][1][3][\textheight]\usebox{\icmlrulerbox}}
\AddToShipoutPicture{%
\icmlruleroffset=\textheight
\advance\icmlruleroffset by 5.2pt 
  \color[rgb]{.7,.7,.7}
  \ifdefined\isaccepted \else
	  \AtTextUpperLeft{%
	    \put(\LenToUnit{-35pt},\LenToUnit{-\icmlruleroffset}){
	      \icmlruler{\icmlrulercount}}
	  }
	 \fi
}
\endinput